\documentclass[final]{siamart0516}
\pdfoutput=1 
\usepackage{braket,amsfonts}
\usepackage{array}
\usepackage{pgfplots}
\usepackage[caption=false]{subfig}
\usepackage{tablefootnote}

\newsiamthm{claim}{Claim}
\newsiamremark{rem}{Remark}
\newsiamremark{expl}{Example}
\newsiamremark{hypothesis}{Hypothesis}
\crefname{hypothesis}{Hypothesis}{Hypotheses}

\usepackage{algorithmic}

\usepackage{graphicx,epstopdf}
\usepackage{float}

\Crefname{ALC@unique}{Line}{Lines}

\usepackage{amsopn}


\newcommand{\cN}{\mathcal{N}}

\newcommand{\one}{\mathbf{1}}

\newcommand{\Jk}{\mathsf{J}}
\newcommand{\Jp}{\mathsf{J}_{\rm p}}
\newcommand{\Jl}{\mathsf{J}_{\rm ls}}

\newcommand{\Jg}{\mathsf{J}_{\rm gl}}
\newcommand{\Pp}{\mathbb{P}_{\rm p}}
\newcommand{\Pl}{\mathbb{P}_{\rm ls}}
\newcommand{\Pg}{\mathbb{P}_{\rm gl}}

\newcommand{\Phip}{\Phi_{\rm p}}
\newcommand{\Phil}{\Phi_{\rm ls}}
\newcommand{\Phig}{\Phi_{\rm gl}}

\newcommand{\mup}{\mu_{\rm p}}
\newcommand{\mul}{\mu_{\rm ls}}

\newcommand{\nup}{\nu_{\rm p}}
\newcommand{\nul}{\nu_{\rm ls}}
\newcommand{\nug}{\nu_{\rm gl}}

\newcommand{\nyst}{Nystr\"{o}m }

\definecolor{darkred}{rgb}{0.6,0.1,0.1}
\definecolor{darkgreen}{rgb}{0.1,0.6,0.1}
\definecolor{darkblue}{rgb}{0.1,0.1,0.6}

\newcommand{\GL}{\mathsf{GL}}
\newcommand{\Se}{S_{\epsilon}}
\newcommand{\We}{W_{\epsilon}}

\newcommand{\bbE}{\mathbb{E}}

\newcommand{\bbR}{\mathbb{R}}
\newcommand{\bbP}{\mathbb{P}}
\newcommand{\bbI}{\mathbb{I}}

\newcommand{\be}{\begin{equation}}
\newcommand{\en}{\end{equation}}

\newcommand{\modk}[1]{\textcolor{blue}{#1}\index {#1}}


\title{Uncertainty Quantification in Graph-Based Classification of High Dimensional Data
  \thanks{Submitted to the editors DATE.
\funding{AB is funded by NSF grant DMS-1118971, NSF
grant DMS-1417674, and ONR grant N00014-16-1-2119. AMS is funded by DARPA (EQUIPS) 
and EPSRC (Programme Grant).  KCZ was partially supported by a grant from the
Simons Foundation and by the Alan Turing Institute under the EPSRC grant EP/N510129/1. Part
of this work was done during the authorÕs stay at the Newton Institute for the program Stochastic
Dynamical Systems in Biology: Numerical Methods and Applications. }}}

\author{ Andrea L Bertozzi 
  \thanks{Department of Mathematics, University of California Los Angeles, Los Angeles, CA (\email{bertozzi@math.ucla.edu}).}%
  \and
  Xiyang Luo
  \thanks{Department of Mathematics, University of California Los Angeles, Los Angeles, CA (\email{mathluo@math.ucla.edu}).}
  \and
  Andrew M. Stuart
   \thanks{Computing and Mathematical Sciences, Caltech, Pasadena, CA (\email{astuart@caltech.edu }).}
  \and
  Konstantinos C. Zygalakis
   \thanks{School  of Mathematics, University of Edinburgh, Edinburgh, Scotland 
    (\email{k.zygalakis@ed.ac.uk }).}
}


\begin{document}
\maketitle

\begin{abstract}
Classification of high dimensional data finds wide-ranging 
applications. In many of these applications equipping the resulting
classification with  a measure of uncertainty may be as important
as the classification itself. In this paper we introduce, develop
algorithms for, and
investigate the properties of, a variety of Bayesian models for the 
task of binary classification; via the posterior distribution
on the classification labels, these methods automatically give 
measures of uncertainty. The methods are all based around the graph
formulation of semi-supervised learning. 

We provide a unified framework which brings together
a variety of methods which have been introduced in different
communities within the mathematical sciences. We study probit classification 
\cite{williams1996gaussian} in the graph-based setting, 
generalize the level-set method for
Bayesian inverse problems \cite{iglesias2015bayesian} to
the classification setting, and generalize the Ginzburg-Landau optimization-based classifier \cite{bertozzi2012diffuse,van2012gamma} to a Bayesian setting;
we also show that the probit and level set approaches are natural
relaxations of the harmonic function approach introduced in
\cite{zhu2003semi}.  
We introduce efficient numerical methods, suited to large data-sets,
for both MCMC-based sampling as well as gradient-based MAP estimation. 
Through numerical experiments we study classification accuracy and 
uncertainty quantification for our models; these experiments showcase
a suite of datasets commonly used to evaluate graph-based semi-supervised 
learning algorithms. 
\end{abstract}

\begin{keywords}
Graph classification, Uncertainty quantification, Gaussian prior
\end{keywords}

\begin{AMS}
6209, 65S05, 9404
\end{AMS}


\section{Introduction}
\label{sec:intro}
\subsection{The Central Idea}
Semi-supervised learning has attracted the attention of many researchers
because of the importance of combining unlabeled data with labeled data. 
In many applications the number of unlabeled data points is so large 
that it is only feasible to label a small subset of these points by hand. 
Therefore, the problem of effectively utilizing correlation information in 
unlabeled data to automatically extend the labels to all the data points 
is an important research question in machine learning. This paper concerns the 
issue of how to address  uncertainty quantification in such classification methods.
In doing so we bring together a variety of themes from the mathematical
sciences, including optimization, PDEs, probability and statistics.   
We will show that a variety of different methods, arising in very distinct
communities, can all be formulated around a common objective function 
$$\quad \Jk(w)=\frac12 \langle w, Pw \rangle + \Phi(w)$$ 
for a real valued function $w$ on the nodes of a graph representing the
data points. The matrix $P$ is proportional to a graph Laplacian derived
from the unlabeled data and the function $\Phi$ involves the
labelled data. The variable $w$ is 
used for classification. Minimizing this objective function 
is one approach to such a classification. However if uncertainty
is modeled then it is natural to consider the probability distribution 
with density $\bbP(w)$ proportional to $\exp\bigl(- \Jk(w)\bigr);$ 
this will be derived using Bayesian formulations of the problem.

We emphasize that the variety of probabilistic 
models considered in this paper arise from different assumptions
concerning the structure of the data. Our objective is to introduce
a unified framework within which to propose and evaluate algorithms 
to sample $\bbP(w)$ or minimize $\Jk(w).$ 
We will focus on Monte Carlo
Markov Chain (MCMC) and gradient descent. 
The objective of the paper is not to assess the 
validity of the assumptions leading to the different models; this is an 
important modeling question best addressed through understanding of
specific classes of data arising in applications.

\subsection{Literature Review}

There are a large number of possible approaches to the semi-supervised
learning problem, developed in both the statistics and machine learning
communities. The review \cite{zhu2005semi} provides an excellent overview.
In this paper we will concentrate exclusively on graph-based methods.
These have the attractive feature of reducing potentially high dimensional
unlabeled data to a real-valued function on the edges of a graph, 
quantifying affinity between the nodes; each node of the graph is 
identified with an unlabeled data point. 

Early graph-based approaches were combinatorial. 
Blum et al. posed the binary semi-supervised classification problem as a 
Markov random field (MRF) over the discrete state space of
binary labels, the MAP estimation of which can be solved using a graph-cut algorithm
in polynomial time \cite{zhu2005semi} . In general, inference for multi-label discrete MRFs is intractable \cite{dahlhaus1992complexity}. However, several approximate algorithms exist for the multi-label case \cite{boykov2001fast, boykov1998markov, madry2010fast}, and have been applied to many imaging
tasks \cite{boykov2001interactive, berthod1996bayesian, li2012markov}.

A different line of work is based on 
using the affinity function on the edges to define  
a real-valued function $w$ on the nodes of the graph, a substantial
compression of high dimensional unlabelled data at each node. The
Dirichlet energy $\Jk_0(w):=\frac{1}{2}\langle w, Pw\rangle$, with $P$ proportional
to the graph Laplacian formed from the affinities on the edges, plays a central
role. A key conceptual issue in 
the graph-based approach is then to connect the labels, which 
are discrete, to this real-valued function.   
Strategies to link the discrete and continuous data then constitute different modeling
assumptions. The line of work initiated in  \cite{zhu2003semi} makes the assumption that
the labels are also real-valued and take the real values $\pm 1$, linking
them directly to the real-valued function on the nodes of the graph; this
may be thought of as a continuum relaxation of the discrete state space 
MRF in \cite{blum2001learning}. The basic method is to
minimize $\Jk_0(w)$ subject to the hard constraint that $w$ agrees
with the label values; alternatively this constraint may be relaxed to a soft additional
penalty term added to $\Jk_0(w).$  These methods are a form of krigging, or 
Gaussian process regression \cite{wahba1990spline,williams1996gaussian}, on a graph. 
A Bayesian interpretation of the approach in \cite{zhu2003semi} 
is given in \cite{zhu2003semib} with further developments 
in \cite{kapoor2005hyperparameter}. 
The Laplacian based approach has since been generalized in \cite{zhou2004learning, belkin2006manifold, talukdar2009new, subramanya2011semi,lu2011multi}; in particular this line of work developed to study the transductive
problem of assigning predictions to data points off the graph. 
A formal framework for graph-based regularization, using $\Jk_0(w)$, can be found in 
\cite{belkin2004regularization,sindhwani200611}. We also mention 
related methodologies such as the support vector machine (SVM) 
\cite{bishop2007pattern} and robust convex minimization 
methods \cite{alaiz2016convex, alaiz2016robust} which may be based around minimization of 
$\Jk_0(w)$ with an additional soft penalty term; however since these
do not have a Bayesian interpretation we do not consider them here.  
Other forms of regularization have been considered such as the graph wavelet 
regularization \cite{shuman2011semi,hammond2011wavelets}. 

The underlying assumption in much of the work described in the previous paragraph
is that the labels are real-valued. An arguably more natural modelling assumption
is that there is a link function, such as the {\tt sign} function, connecting
a real-valued function on the graph nodes with the labels via thresholding. 
This way of thinking underlies
the probit approach \cite{williams1996gaussian} and the Bayesian level set
method \cite{iglesias2015bayesian,dunlop2016hierarchical}, both of which we will
study in this paper. Lying between the approaches initiated by
\cite{zhu2003semi} and those based on thesholding are the 
methods based on optimization over real-valued variables which are penalized from
taking values far from ${\pm 1}$; this idea was introduced
in the work of Bertozzi et al. \cite{bertozzi2012diffuse,van2012gamma}.
It is based on a Ginzburg-Landau relaxation of the 
discrete Total Variation (TV) functional, 
which  coincides with the graph cut energy.
This was generalized to multiclass classification
in \cite{garcia2014multiclass}. Following this line of work, several new
algorithms were developed for semi-supervised and unsupervised classification
problems on weighted graphs \cite{hu2015multi,merkurjev2013mbo, osting2014minimal}.

The Bayesian way of thinking is foundational in artificial
intelligence and machine learning research \cite{bishop2007pattern}. 
However, whilst the book 
\cite{williams1996gaussian} conducts a number of thorough
uncertainty quantification
studies for a variety of learning problems using Gaussian process
priors, most of the papers studying graph based learning referred to
above primarily use the Bayesian approach to learn hyperparameters 
in an optimization context, and do not consider uncertainty quantification. 
Thus the careful study of, and development of algorithms for,
uncertainty quantification in classification remains largely open.
There are a wide range of methodologies employed in the field of
uncertainty quantification, and the reader may consult the books
\cite{smith,sull,xiu} and the recent article \cite{owhadi} for
details and further references. Underlying all of these methods is
a Bayesian methodology which we adopt and which is attractive both for its clarity with
respect to modelling assumptions and its basis for application of
a range of computational tools. Nonetheless it is important to be
aware of limitations in this approach, in particular with regard
to its robustness with respect to the specification of the model,
and in particular the prior distribution on the unknown of
interest \cite{owhadi2}.

\subsection{Our Contribution}
In this paper, we focus exclusively on the problem of binary 
semi-supervised classification; however the methodology and conclusions
will extend beyond this setting, and to multi-class classification in particular.
Our focus is on a presentation which puts uncertainty quantification at the 
heart of the problem formulation, and we make four primary contributions:

\begin{itemize}

\item we define a number of different Bayesian formulations of the
graph-based semi-supervised learning problem and we connect them to
one another, to binary classification methods and to a variety 
of PDE-inspired approaches to classification; in so doing we 
provide a single framework for a variety of methods which have arisen
in distinct communities and we open up
a number of new avenues of study for the problem area; 

\item we highlight the pCN-MCMC method for posterior sampling 
which, based on analogies with its use for PDE-based inverse problems
\cite{CRSW08}, has the potential to 
sample the posterior distribution in a number of steps
which is independent of the number of graph nodes;

\item we introduce approximations exploiting the empirical properties of
the spectrum of the graph Laplacian, generalizing methods used in the
optimization context in \cite{bertozzi2012diffuse}, allowing for
computations at each MCMC step which scale well with respect to
the number of graph nodes;

\item we demonstrate, by means of numerical experiments on a range of
problems, both
the feasibility, and value, of Bayesian uncertainty quantification 
in semi-supervised, graph-based, learning, using the algorithmic ideas
introduced in this paper.

\end{itemize}

\subsection{Overview and Notation}
The paper is organized as follows. In section \ref{sec:2}, 
we give some background material needed for problem specification. 
In section \ref{sec:3}
we formulate the three Bayesian models used for the classification tasks. 
Section \ref{sec:4} introduces the MCMC and optimization algorithms that we
use. In section \ref{sec:5}, we present and discuss results of 
numerical experiments to illustrate our findings; these are
based on four examples of increasing size:
the house voting records from 1984 (as used in \cite{bertozzi2012diffuse}), 
the tunable two moons data set \cite{buhler2009spectral},
the MNIST digit data base \cite{lecun1998mnist}
and the hyperspectral gas plume imaging problem \cite{broadwater2011primer}.
We conclude in section \ref{sec:6}.
To aid the reader, we give here an overview of notation used 
throughout the paper. 
\begin{itemize}
\item  $Z$ the set of nodes of the graph, with cardinality $N$;
\item $Z'$ the set of nodes where labels are observed, with cardinality $J \le N$; 
\item $x:Z \mapsto \bbR^d$, feature vectors; 
\item $u: Z \mapsto \bbR$ latent variable characterizing nodes,
with $u(j)$ denoting evaluation of $u$ 
at node $j$; 
\item $S:\bbR \mapsto \{-1, 1\}$ the thresholding function; 
\item $\Se$ relaxation of $S$ using gradient flow 
in double-well potential $\We$;
\item $l: Z \mapsto  \{-1, 1\}$ the label value at each node with $l(j)=S(u(j));$
\item  $y: Z' \mapsto \{-1, 1\}$ or  $y: Z' \mapsto \bbR$, label data; 
\item $v: Z \mapsto \bbR$ with $v$ being a relaxation of the label variable $l$;
\item $A$ weight matrix of the graph, $L$ the resulting normalized 
 graph Laplacian;
 \item  $P$ the precision matrix and $C$ the covariance matrix, both found from $L$; 
\item $\{q_k,\lambda_k\}_{k=0}^{N-1}$ eigenpairs of $L$; 
\item $U$ :  orthogonal complement of the null space of the graph Laplacian 
$L$, given by $q_0^{\perp}$; 
\item $\GL$ : Ginzburg-Landau functional;
\item $\mu_0$, $\nu_0$ : prior probability measures; 
\item the measures denoted $\mu$ typically take argument $u$ and are real-valued; the measures denoted $\nu$ take argument $l$ on label space, or argument
$v$ on a real-valued relaxation of label space;
\end{itemize}

\section{Problem Specification}
\label{sec:2}
In subsection \ref{ssec:ssl} we formulate semi-supervised learning as 
a problem on a graph. Subsection \ref{ssec:graph} defines the relevant
properties of the graph Laplacian and in subsection \ref{ssec:gauss}
these properties are used to construct a Gaussian probability distribution; 
in section \ref{sec:3} this Gaussian will be used
to define our prior information
about the classification problem. In subsection \ref{ssec:thr} we discuss
thresholding which provides a link between the real-valued prior information,
and the label data provided for the semi-supervised learning task; in
section \ref{sec:3} this will be used to definine our likelihood.

\subsection{Semi-Supervised Learning on a Graph}
\label{ssec:ssl}

We are given a set of feature vectors $X = \{x(j), \dots, x(j), \dots, x(N)\}$
for each $j \in Z: = \{1, \dots, N\}.$
For each $j$ the feature vector $x(j)$
is an element of $\bbR^d$, so that $X \in \bbR^{d \times N}.$ 
Graph learning starts from the construction of an undirected graph $G$ with vertices
$Z$ and edge weights $\{A\}_{ij}=a_{ij}$ computed from the feature set  $X$. The
weights  $a_{ij}$ will depend only on $x(i)$ and $x(j)$ and will decrease 
monotonically with some measure of distance between $x(i)$ and $x(j)$; the weights
thus encode affinities between nodes of the graph. Although a very important modelling
problem, we do not discuss how to choose these weights in this paper.
For graph semi-supervised learning, we are also given a partial set of (possibly noisy) 
labels $y = \{y(j)|j\in Z'\}$, where $Z' \subseteq Z$ has size $J \le N$. 
The task is to infer the labels for all nodes in $Z$, 
using the weighted graph $G=(Z,A)$ and also the set of noisily observed labels $y$. 
In the Bayesian formulation which we adopt the feature set $X$, and hence 
the graph $G$, is viewed as prior information, describing correlations
amongst the nodes of the graph, and we combine this with a likelihood
based on the noisily observed labels $y$, 
to obtain a posterior distribution on the labelling of all nodes. 
Various Bayesian formulations, which differ in the specification of the observation model and/or the prior, are described in section \ref{sec:3}.
In the remainder of this section we give the background needed to
understand all of these formulations, thereby touching on the
graph Laplacian itself, its link to Gaussian probability distributions
and, via thresholding, to non-Gaussian probability distributions and to the
Ginzburg-Landau functional. An important point to appreciate is that building our
priors from Gaussians confers considerable
computational advantages for large graphs; for this reason the non-Gaussian
priors will be built from Gaussians via change of measure or push forward
under a nonlinear map.

\subsection{The Graph Laplacian}
\label{ssec:graph}

The graph Laplacian is central to many graph-learning algorithms. There
are a number of variants used in the literature;
see \modk{\cite{bertozzi2012diffuse, von2007tutorial}} for a discussion. We will work with the
normalized Laplacian, defined from the weight matrix $A=\{a_{ij}\}$ as follows.  
We define the diagonal matrix $D={\rm diag}\{d_{ii}\}$ with entries 
$d_{ii} = \sum_{j \in Z} a_{ij}.$ If we assume that the graph $G$ is 
connected, then $d_{ii}>0$ for all nodes $i \in Z$. We can then define the 
normalized graph Laplacian\footnote{In the majority of the paper
the only property of $L$ that we use is that it is symmetric positive semi-definite. We could therefore use other graph Laplacians, 
such as the unnormalized choice $L=D-A$, in most of the paper. The only
exception is the spectral approximation sampling
algorithm introduced later; that particular algorithm exploits empirical
properties of the symmetrized graph Laplacian. 
Note, though, that the choice of which graph Laplacian to
use can make a significant difference -- see \cite{bertozzi2012diffuse}, and Figure 2.1 therein. To make our exposition more concise we confine our presentation to the graph Laplacian \eqref{eq:s-lap}.}  as
\begin{equation}
\label{eq:s-lap}L=I-D^{-1/2}AD^{-1/2},
\end{equation} 
and the graph Dirichlet energy as  $J_0(u):=\frac12 \langle u,Lu \rangle$. Then
\begin{equation}
\label{eq:dirichlet}
J_0(D^{\frac12}u)=\frac{1}{4}\sum_{\{i,j\} \in Z \times Z} a_{ij}(u(i)-u(j))^2.
\end{equation}
Thus, similarly to the classical Dirichlet energy, this quadratic form
penalizes nodes from having different function values, with penalty
being weighted with respect to the similarity weights from $A$.  
Furthermore the identity shows that
$L$ is positive semi-definite. Indeed the vector of ones $\bbI$ 
is in the null-space of $D-A$ by construction, and hence $L$ 
has a zero eigenvalue with corresponding eigenvector $D^{\frac12}\bbI$.

We let $(q_k,\lambda_k)$ denote the eigenpairs of the matrix $L$,
so that\footnote{For the normalized graph Laplacian,
the upper bound $\lambda_{\rm max}=2$ may be found in 
\cite[Lemma 1.7, Chapter 1]{chung1997spectral}; but with further structure on the
weights many spectra saturate at $\lambda_{\rm max} \approx 1$ 
(see Appendix), a fact we will exploit later.} 
\begin{equation}
\label{eq:mz}
\lambda_0 \le \lambda_1 \le \dots \le \lambda_{N-1} \le \lambda_{\rm max}<\infty, \quad \langle q_j,q_k\rangle=\delta_{jk}.
\end{equation}
The eigenvector corresponding to $\lambda_0=0$
is $q_0=D^{\frac12}\bbI$ and $\lambda_1>0$,
assuming a fully connected graph. Then 
$L=Q\Lambda Q^*$ where $Q$ has columns $\{q_k\}_{k=0}^{N-1}$ and 
$\Lambda$ is a diagonal
matrix with entries $\{\lambda_k\}_{k=0}^{N-1}$. Using these eigenpairs  the graph Dirichlet energy can be written as 
\begin{equation}
\label{eq:eigdir}
\frac{1}{2}\langle u,L u \rangle=\frac{1}{2}\sum_{j=1}^{N-1} \lambda_j (\langle u, q_j \rangle )^2; 
\end{equation}
this is analogous to decomposing the classical Dirichlet energy using
Fourier analysis.

\subsection{Gaussian Measure}
\label{ssec:gauss}
We now show how to build a Gaussian distribution with 
negative log density proportional to $J_0(u).$
Such a Gaussian prefers functions that have larger components 
on the first few eigenvectors of the graph Laplacian, where the eigenvalues
of $L$ are smaller. The corresponding eigenvectors
carry rich geometric information about the weighted graph. 
For example, the second eigenvector of $L$ is the  
{\em Fiedler vector} and solves a relaxed normalized min-cut problem
\cite{von2007tutorial,higham2004unified}. The Gaussian distribution thereby 
connects geometric intuition embedded within the graph Laplacian to
a natural probabilistic picture.

To make this connection concrete
we define diagonal matrix $\Sigma$ with entries defined
by the vector
$$(0,\lambda_1^{-1}, \cdots, \lambda_{N-1}^{-1})$$
and define the positive semi-definite covariance matrix $C =c Q \Sigma Q^*$; choice
of the scaling $c$ will be discussed below.
We let $\mu_0:=\cN(0,C).$
Note that the covariance matrix is that of a Gaussian with variance proportional
to $\lambda_j^{-1}$ in direction $q_j$ thereby leading to structures which
are more likely to favour the Fiedler vector ($j=1$), and lower values of $j$
in general, than it does for higher values. The fact that the first eigenvalue of
$C$ is zero ensures that any draw from $\mu_0$ changes sign, because it will be
orthogonal to $q_0.$\footnote{Other treatments of the first eigenvalue are 
possible and may be useful but for simplicity of exposition 
we do not consider them in this paper.} 
To make this intuition explicit we recall the Karhunen-Loeve
expansion which constructs  a sample $u$ from the Gaussian $\mu_0$ according to 
the random sum
\begin{equation}
\label{eq:KL0}
u=c^{\frac12}\sum_{j=1}^{N-1} \lambda_j^{-\frac12} q_j z_j,
\end{equation}
where the $\{z_j\}$ are i.i.d. $\cN(0,1).$
Equation \eqref{eq:mz} thus implies that $\langle u,q_0 \rangle=0.$

We choose the constant of proportionality $c$ 
as a rescaling which enforces the property
$\bbE |u|^2=N$ for $u \sim \mu_0:=\cN(0,C)$; 
in words the per-node variance is $1$. 
Note that, using the orthogonality of the $\{q_j\}$, 
\begin{equation}
\label{eq:c2}
\bbE |u|^2=c\sum_{j=1}^{N-1}\lambda_j^{-1}\bbE z_j^2=c\sum_{j=1}^{N-1}\lambda_j^{-1}
\quad\Longrightarrow\quad c=N\Bigl(\sum_{j=1}^{N-1}\lambda_j^{-1}\Bigr)^{-1}.
\end{equation}
We reiterate that the support of the measure $\mu_0$ is 
the space $U:=q_0^{\perp}={\rm span}\{q_1, \cdots, q_{N-1}\}$
and that, on this space, the probability density function is
proportional to $$\exp\Bigl(-c^{-1}J_0(u)\Bigr)=\exp\Bigl(-\frac{1}{2c}\langle u,Lu \rangle \Bigr),$$
so that the \emph{precision matrix} of the Gaussian is $P=c^{-1}L.$
In what follows the sign of $u$ will be related to the classification; 
since all the entries of $q_{0}$ are positive, working on the space $U$ 
ensures a sign change in $u$, and hence a non-trivial classification.

\subsection{Thresholding and Non-Gaussian Probability Measure}
\label{ssec:thr}
For the models considered in this paper, the label space of the problem is discrete while the latent variable $u$ through which we will capture the correlations
amongst nodes of the graph, encoded in the feature vectors, is real-valued. 
We describe thresholding, and a relaxation of thresholding,
to address the need to connect these two differing sources of information about the problem. 
In what follows the latent variable $u: Z \to \bbR$ (appearing in the probit and Bayesian
level set methods) is thresholded to obtain 
the label variable $l: Z \to \{-1,1\}.$ The variable $v: Z \to \bbR$
(appearing in the Ginzburg-Landau method)
is a real-valued relaxation of the label variable $l.$
The variable $u$ will be endowed with a Gaussian probability distribution. From this
the variable $l$ (which lives on a discrete space) and $v$ (which is real-valued, but
concentrates near the discrete space supporting $l$) will be endowed with 
non-Gaussian probability distributions.

Define the (signum) function $S:\bbR \mapsto \{-1,1\}$ by
$$S(u)=1, \;u \ge  0\quad{\rm and}\quad S(u)=-1, \; u<0.$$
This will be used to connect the latent variable
$u$ with the label variable $l$.
The function $S$ may be relaxed by defining
$\Se(u)=v|_{t=1}$ where $v$ solves the gradient flow
$$\dot{v}=-\nabla \We(v), \quad v|_{t=0}=u\quad\quad{\rm for}\,\,{\rm potential}
\quad\quad   \We(v) = \frac{1}{4\epsilon}(v^2 - 1)^2.$$
This will be used, indirectly, to connect the latent variable $u$ with 
the real-valued relaxation of the label variable, $v$.
Note that $\Se(\cdot) \to S(\cdot)$, pointwise, as $\epsilon \to 0$, on
$\bbR\backslash\{0\}.$ This reflects the fact that the gradient flow minimizes
$\We$, asymptotically as $t \to \infty$, 
whenever started on $\bbR\backslash\{0\}.$

We have introduced a Gaussian measure $\mu_0$ on the latent variable $u$
which lies in $U \subset \bbR^N$; we 
now want to introduce two ways of constructing non-Gaussian measures on 
the label space $\{-1,1\}^{N},$ or on real-valued relaxations of label
space, building on the measure $\mu_0.$ 
The first is to consider the
push-forward of measure $\mu_0$ under the map $S$: $S^{\sharp}\mu_0.$
When applied to a sequence $l:Z \mapsto \{-1, 1\}^N$ this gives
$$\bigl(S^{\sharp}\mu_0\bigr)(l)=\mu_0\Bigl( \{u|S(u(j))=l(j), \forall 1 \le j \le N\} \Bigr),$$
recalling that $N$ is the cardinality of $Z$. The definition is readily 
extended to components of $l$ defined only on subsets of $Z$.
Thus $S^{\sharp}\mu_0$ is a measure on the label
space $\{-1, 1\}^N.$ 
The second approach is to work with a change of measure
from the Gaussian $\mu_0$ in such a way
that the probability mass on $U \subset \bbR^N$ concentrates close 
to the label space $\{-1, 1\}^N$. We may achieve this by defining the measure 
$\nu_0$ via its Radon-Nykodim derivative 
\begin{equation}
\label{eq:gl-meas}
\frac{d\nu_0}{d\mu_0}(v) \propto e^{-\sum_{j\in Z} \We(v(j))}.
\end{equation}
We name $\nu_0$ the Ginzburg-Landau measure, since the negative log density 
function of $\nu_0$ is the graph Ginzburg-Landau functional  
\begin{equation}
\label{eq:gl-dist}
\GL(v):=\frac{1}{2c} \langle v, Lv \rangle +\sum_{j\in Z} \We(v(j)). 
\end{equation}
The Ginzburg-Landau distribution defined by $\nu_0$ 
can be interpreted as a non-convex ground 
relaxation of the discrete MRF model \cite{zhu2005semi}, in contrast to the convex relaxation 
which is the Gaussian Field \cite{zhu2003semi}. 
Since the double well has minima at the label 
values $\{-1, 1\}$, the probability mass of $\nu_0$ is concentrated near 
the modes $\pm 1$, and $\epsilon$ controls this concentration effect.

\section{Bayesian Formulation}
\label{sec:3}
\label{sec:problem-spec}
In this section we formulate three different  Bayesian models for
the semi-supervised learning problem. 
The three models all combine the ideas described in the previous section
to define three distinct posterior distributions. It is important to
realize that these different models will give different answers to
the same questions about uncertainty quantification.
The choice of which Bayesian model to use is related to the data itself, and making
this choice is beyond the scope of this paper. Currently the choice
must be addressed on a case by case basis, as is done when choosing
an optimization method for classification. 
Nonetheless we will demonstrate that the shared
structure of the three models means that a common algorithmic framework
can be adopted and we will make some conclusions about the relative costs
of applying this framework to the three  models.

We denote the latent variable by $u(j)$, $j \in Z$, the thresholded value of $u(j)$ by $l(j)=S(u(j))$ which is interpreted as 
the label assignment at each node $j$, and noisy observations of the 
binary labels by 
$y(j)$, $j \in Z'$.  
The variable $v(j)$ will be used to denote the real-valued relaxation of
$l(j)$ used for the Ginzburg-Landau model.
Recall Bayes formula which transforms a prior density $\bbP(u)$
on a random variable $u$ into a posterior density $\bbP(u|y)$ 
on the conditional random variable $u|y$:
$$\bbP(u|y)=\frac{1}{\bbP(y)}\bbP(y|u)\bbP(u).$$
We will now apply this formula to condition our graph latent variable $u$,
whose thresholded values correspond to labels, on the noisy label data $y$
given at $Z'$. As prior on $u$ we will always use $\bbP(u)du=\mu_0(du);$ we will\
describe two different likelihoods.
We will also apply the formula to condition relaxed label variable $v$, on the same
label data $y$, via the formula
$$\bbP(v|y)=\frac{1}{\bbP(y)}\bbP(y|v)\bbP(v).$$
We will use as prior the non-Gaussian $\bbP(v)dv=\nu_0(dv).$

For the probit and level-set models we now explicitly state the prior 
density $\bbP(u)$,
the likelihood function $\bbP(y|u)$, and the posterior density $\bbP(u|y)$;
in the Ginzburg-Landau case $v$ will replace $u$ and we
will define the densities $\bbP(v), \bbP(y|v)$ and $\bbP(v|y)$. 
Prior and posterior probability
measures associated with letter $\mu$ are on the latent variable $u$;
measures associated with letter $\nu$ are on the label space, or real-valued
relaxation of the label space.

\subsection{Probit}
\label{subsec:probit}

The probit method is designed for classification and is described in
\cite{williams1996gaussian}. In that context Gaussian process priors
are used and, unlike the graph Laplacian construction used here, 
do not depend on the unlabel data. Combining Gaussian process priors and
graph Laplacian priors was suggested and studied in \cite{belkin2006manifold,sindhwani200611,lu2011multi}.  A recent fully Bayesian treatment of the
methodology using unweighted graph Laplacians 
may be found in the paper \cite{hartog2016nonparametric}. 
In detail our model is as follows.

\noindent{\bf Prior} We take as prior on $u$ the Gaussian $\mu_0.$
Thus 
$$\bbP(u) \propto \exp\Bigl(-\frac12 \langle u, Pu \rangle \Bigr).$$

\noindent{\bf Likelihood} For any $j \in Z'$
$$y(j)=S\Bigl(u(j)+\eta(j)\Bigr)$$ 
with the $\eta(j)$ drawn i.i.d from $\cN(0,\gamma^2).$
We let
$$\Psi(v;\gamma)=\frac{1}{\sqrt{2\pi\gamma^2}}\int_{-\infty}^v \exp\big(-t^2/2\gamma^2\bigr)dt,$$
the cumulative distribution function (cdf) of $\cN(0,\gamma^2)$,
and note that then
$$\bbP\bigl(y(j)=1|u(j)\bigr)=\bbP\bigl(\cN(0,\gamma^2)>-u(j)\bigr)=\Psi(u(j);\gamma)=\Psi(y(j)u(j);\gamma);$$
similarly
$$\bbP\bigl(y(j)=-1|u(j)\bigr)=\bbP\bigl(\cN(0,\gamma^2)<-u(j)\bigr)=\Psi(-u(j);\gamma)=\Psi(y(j)u(j);\gamma).$$ 

\noindent{\bf Posterior}
Bayes' Theorem gives posterior $\mup$ with probability density function (pdf)
$$\Pp(u|y) \propto 
\exp\Bigl(-\frac12 \langle u, Pu \rangle -\Phip(u;y)\Bigr)$$
where
$$\Phip(u;y):=
-\sum_{j \in Z'}\log \bigl(\Psi(y(j)u(j);\gamma)\bigr).$$
We let $\nup$ denote the push-forward under $S$ of $\mup: \nup=S^\sharp \mup.$

\noindent{\bf MAP Estimator}
This is the minimizer of the negative of the log posterior. Thus
we minimize the following objective function over $U$:
$$\Jp(u)=\frac12 \langle u, Pu \rangle-\sum_{j \in Z'}
{\rm log} \Bigl(\Psi(y(j)u(j);\gamma)\Bigr).$$
This is a convex function, a fact which is well-known in related contexts, 
but which we state and prove in Proposition 1 Section B of the appendix for the sake of completeness.
In view of the close relationship between this problem and the
level-set formulation described next, for which there are no minimizers,
we expect that minimization may not be entirely straightforward in the
$\gamma \ll 1$ limit. This is manifested in the presence of near-flat regions 
in the probit log likelihood function when $\gamma \ll 1$.

Our variant on the probit methodology differs from that in
\cite{hartog2016nonparametric} in several ways: (i) our
prior Gaussian is scaled to have per-node variance one, whilst in
\cite{hartog2016nonparametric} the per node variance is a hyper-parameter
to be determined; (ii) our prior is supported on $U=q_0^{\perp}$ whilst
in \cite{hartog2016nonparametric} the prior precision is found by shifting
$L$ and taking a possibly fractional power of the resulting matrix, resulting
in support on the whole of $\bbR^N$; (iii) we allow for a scale parameter
$\gamma$ in the observational noise, whilst in \cite{hartog2016nonparametric}
the parameter $\gamma=1.$

\subsection{Level-Set}
\label{subsec:levelset}
This method is designed for problems considerably more general 
than classification on a graph \cite{iglesias2015bayesian}. For the current application, this model is exactly the same as probit except for the order in which the noise $\eta(j)$ and the thresholding function $S(u)$ is applied in
the definition of the data. 
Thus we again take as {\bf Prior} for $u$, the Gaussian $\mu_0.$
Then we have:

\noindent{\bf Likelihood} For any $j \in Z'$
$$y(j)=S\bigl(u(j)\bigr)+\eta(j)$$ 
with the $\eta(j)$ drawn i.i.d from $\cN(0,\gamma^2).$
Then
$$\bbP\Bigl(y(j)|u(j)\Bigr) \propto \exp\Bigl(-\frac{1}{2\gamma^2}
|y(j)-S\bigl((u(j)\bigr)|^2\Bigr).$$

\noindent{\bf Posterior} Bayes' Theorem gives posterior $\mul$ with pdf
$$\Pl(u|y) \propto \exp\Bigl(-\frac12 \langle u, Pu \rangle -
\Phil(u;y)\Bigr)$$
where
$$\Phil(u;y)=\sum_{j \in Z'} \Bigl(\frac{1}{2\gamma^2}
|y(j)-S\bigl(u(j)\bigr)|^2\Bigr).$$ 
We let $\nul$ denote the pushforward under $S$ of $\mul: \nul=S^\sharp \mul.$

\noindent{\bf MAP Estimator Functional}
The negative of the log posterior is, in this case, given by
$$\Jl(u)=\frac12 \langle u, Pu \rangle+\Phil(u;y).$$
However, unlike the probit model, the Bayesian level-set 
method has no MAP estimator -- the infimum of $\Jl$ is not attained and
this may be seen by noting that, if the infumum was attained at any
non-zero point $u^\star$ then $\epsilon u^\star$ would reduce the
objective function for any $\epsilon \in (0,1)$; however the point
$u^\star=0$ does not attain the infimum.
This proof is detailed in \cite{iglesias2015bayesian} for a closely related 
PDE based model, and the proof is easily adapted.

\subsection{Ginzburg-Landau}
\label{subsec:gl}
For this model, we take as prior the Ginzburg-Landau measure $\nu_0$
defined by (\ref{eq:gl-meas}),
and employ a Gaussian likelihood for the observed labels. This construction gives the Bayesian posterior whose MAP estimator is the objective function introduced and studied in \cite{bertozzi2012diffuse}.

\noindent{\bf Prior} We define prior on $v$ to be the Ginzburg-Landau measure 
$\nu_0$ given by (\ref{eq:gl-meas}) with density
$$\bbP(v) \propto e^{-\GL(v)}.$$

\noindent{\bf Likelihood} For any $j \in Z'$
$$y(j)=v(j)+\eta(j)$$ 
with the $\eta(j)$ drawn i.i.d from $\cN(0,\gamma^2).$
Then
$$\bbP\Bigl(y(j)|v(j)\Bigr) \propto \exp\Bigl(-\frac{1}{2\gamma^2}
|y(j)-v(j)|^2\Bigr).$$

\noindent{\bf Posterior} Recalling that $P=c^{-1}L$ we see that
Bayes' Theorem gives posterior $\nug$ with pdf
\begin{eqnarray*}
\Pg(v|y) &\propto& \exp\Bigl(-\frac12 \langle v, Pv \rangle - \Phig(v;y)\Bigr),\\
\Phig(v;y)&:=&\sum_{j \in Z} W_\epsilon\bigl(v(j)\bigr)+\sum_{j \in Z'} \Bigl(\frac{1}{2\gamma^2}
|y(j)-v(j)|^2\Bigr) \Bigr).
\end{eqnarray*}

\noindent{\bf MAP Estimator}
This is the minimizer of the negative of the log posterior. Thus
we minimize the following objective function over $U$:
$$\Jg(v)=\frac12 \langle v, Pv \rangle+\Phig(v;y).$$
This objective function was introduced in \cite{bertozzi2012diffuse}
as a relaxation of the min-cut problem, penalized by data; the
relationship to min-cut was studied rigorously in
\cite{van2012gamma}.
The minimization problem for $\Jg$ is non-convex and has multiple
minimizers, reflecting the combinatorial character of the
min-cut problem of which it is a relaxation.

\subsection{Uncertainty Quantification for Graph Based Learning}
\label{ssec:uq}

In Figure \ref{fig:spectral-approx}
we plot the component of the negative log likelihood at a labelled
node $j$, as a function of the latent variable $u=u(j)$
with data $y=y(j)$ fixed, for the probit and Bayesian level-set
models. The log likelihood for the Ginzburg-Landau 
formulation is not directly comparable as it is a function of the 
relaxed label variable $v(j)$, with
respect to which it is quadratic with minimum at the data point $y(j).$

\begin{figure}[!ht]
\centering
\includegraphics[height=4.2cm, width=6cm]{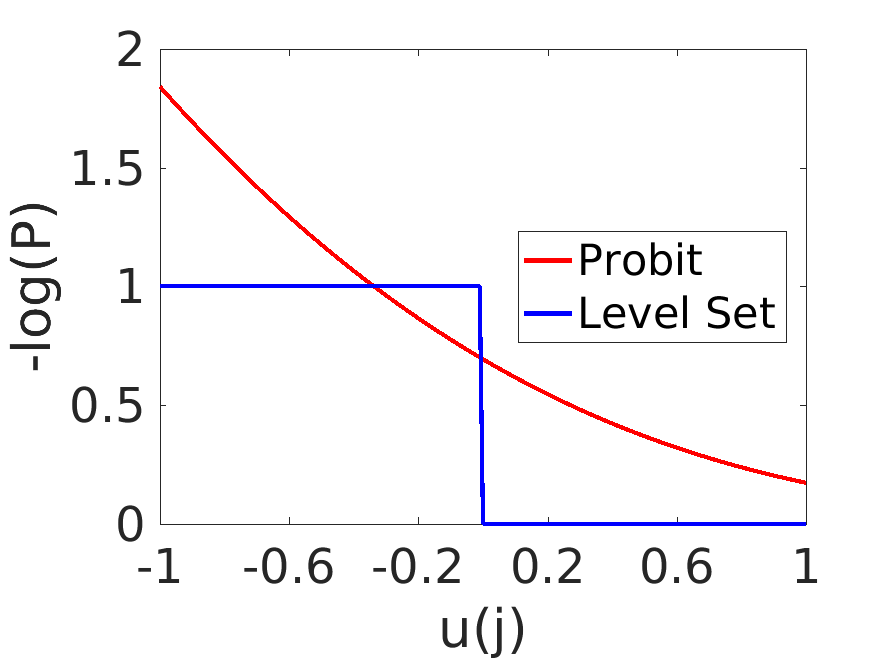}
\caption{Plot of a component of the negative log likelihood for a fixed node $j$. We set $\gamma = 1/\sqrt{2}$ for probit and Bayesian level-set. Since $\Phi(u(j); 1) = \Phi(-u(j); -1)$ for probit and Bayesian level-set, we omit the plot for $y(j) = -1$.} \label{fig:spectral-approx}
\end{figure}

The probit and Bayesian level-set models lead to posterior
distributions $\mu$ (with different subscripts) in latent variable
space, and pushforwards under $S$, denoted $\nu$ (also with different subscripts),
in label space. The Ginzburg-Landau formulation leads to a measure
$\nug$ in (relaxed) label space. Uncertainty quantification for semi-supervised
learning is
concerned with completely characterizing these posterior distributions. 
In practice this may be achieved by sampling using MCMC methods. In this
paper we will study four measures of uncertainty:

\begin{itemize}

\item we will study the empirical pdfs of the latent and label variables
at certain nodes;

\item we will study the posterior mean of the label variables at certain nodes; 

\item we will study the posterior variance of the label variables averaged over
all nodes;

\item we will use the posterior mean or variance to order nodes into those whose 
classifications are most uncertain and those which are most certain.

\end{itemize}

For the probit and Bayesian level-set models we interpret the thresholded 
variable $l = S(u)$ as the binary label assignments corresponding to a 
real-valued configuration $u$; for Ginzburg-Landau we may simply take $l=v$
as the model is posed on (relaxed) label space. The node-wise posterior mean of 
$l$ can be used as a useful confidence score of the class assignment of 
each node. The node-wise posterior mean $s^l_{j}$ is defined as 
\begin{equation}
\label{eq:posterior-mean}
s^l_{j} := \mathbb{E}_{\nu}(l(j)), 
\end{equation}
with respect to any of the posterior measures $\nu$ in label space. 
Note that for probit and Bayesian level set $l(j)$ is a binary random variable
taking values in $\{\pm 1\}$ and we have $s^l_{j} \in [-1,1].$ 
In this case if $q=\nu(l(j)=1)$ then $q=\frac12(1+s^l_{j})$.
Furthermore
\[
{\rm Var}_{\nu}(l(j))=4q(1-q)=1-(s^l_{j})^2. 
\]
Later we will find it useful to consider the variance averaged over all nodes
and hence define\footnote{Strictly speaking ${\rm Var}(l)=N^{-1}{\rm Tr}\bigl({\rm Cov}(l)\bigr).$}
\begin{equation}
\label{eq:mean-posterior-variance}
{\rm Var}(l) = \frac{1}{N} \sum_{j=1}^N{\rm Var}_{\nu}(l(j)).
\end{equation}
Note that the maximum value obtained by {\rm Var}(l) is $1$.
This maximum value is attained under the Gaussian prior $\mu_0$ that 
we use in this paper. The deviation from this maximum under
the posterior is a measure of the information content of the
labelled data. Note, however, that the prior does contain information
about classifications, in the form of correlations between vertices;
this is not captured in \eqref{eq:mean-posterior-variance}.


\section{Algorithms}
\label{sec:4}
From Section \ref{sec:3}, we see that for all of the models considered, 
the posterior $\bbP(w|y)$ has the form 
$$\bbP(w|y) \propto \exp\bigl(-\Jk(w)\bigr), \quad \Jk(w)=\frac12 \langle w, Pw \rangle + \Phi(w))$$ for some function 
$\Phi$, different for each of the three models (acknowledging that in
the Ginzburg-Landau case the independent variable is $w=v$, real-valued
relaxation of label space, whereas for the other models $w=u$ an underlying latent variable
which may be thresholded by $S(\cdot)$ into label space.) Furthermore, the MAP
estimator is the minimizer of $\Jk.$ Note that $\Phi$ is
differentiable for the Ginzburg-Landau and probit models, but not for the 
level-set model. 
We introduce algorithms for both sampling (MCMC) and MAP estimation
(optimization)  
that apply in this general framework. The sampler we employ does not use
information about the gradient of $\Phi$; the MAP estimation algorithm
does, but is only employed on the Ginzburg-Landau and probit models. 
Both sampling and optimization algorithms 
use spectral properties of the precision matrix $P$, which
is proportional to the graph Laplcian $L$.

\subsection{MCMC}
Broadly speaking there are two strong competitors as samplers for this problem:
Metropolis-Hastings based methods, and Gibbs based samplers.
In this paper we focus entirely on Metropolis-Hastings methods
as they may be used on all three models considered here.
In order to induce scalability with respect to size of $Z$
we use the preconditioned Crank-Nicolson (pCN) method described in \cite{CRSW08}
and introduced in the context of diffusions by  Beskos et. al. in \cite{BRSV08}
and by Neal in the context of machine learning \cite{neal}.
The method is also robust with respect to the small noise limit $\gamma \to 0$
in which the label data is perfect. The pCN based
approach is compared with Gibbs like methods for probit, to which they both apply,
in \cite{luos}; both large data sets $N \to \infty$ and small noise $\gamma \to 0$
limits are considered.

The standard random walk Metropolis (RWM) algorithm 
suffers from the fact that the optimal proposal variance or stepsize 
scales inverse proportionally to the dimension of the state space \cite{R}, 
which is the graph size $N$ in this case. The pCN method is designed so that 
the proposal variance required to obtain a given
acceptance probability scales independently of the dimension of the 
state space (here the number of graph nodes $N$), hence in practice 
giving faster convergence of the MCMC when compared with 
RWM \cite{beskos2009optimal}. We restate the pCN method 
as Algorithm \ref{alg:pcnfull}, and then follow with various variants
on it in  Algorithms \ref{alg:pcneig} and \ref{alg:pcneigfill}. 
In all three algorithms $\beta \in [0,1]$ is the key parameter which
determines the efficiency of the MCMC method: small $\beta$ leads to high 
acceptance probability but small moves; large $\beta$ leads to low acceptance
probability and large moves. Somewhere between these extremes is an
optimal choice of $\beta$ which minimizes the asymptotic variance of
the algorithm when applied to compute a given expectation.

\begin{algorithm}[!ht]
\caption{pCN Algorithm}
\label{alg:pcnfull}
\begin{algorithmic}[1]
\STATE{Input: $L$. $\Phi(u)$. $u^{(0)} \in U$.}
\STATE{Output: $M$ Approximate samples from the posterior distribution}
\STATE{Define: $\alpha(u,w) = \min\{1, \exp(\Phi(u) - \Phi(w)\}$.} 
\WHILE{$k < M$ }
\STATE{ $w^{(k)} = \sqrt{1 - \beta^2}u^{(k)} + \beta \xi^{(k)}$, where $\xi^{(k)} \sim \cN(0, C)$ via equation (\ref{eq:KL}). }
\STATE{Calculate acceptance probability $\alpha(u^{(k)}, w^{(k)})$.}
\STATE{Accept $w^{(k)}$ as $u^{(k+1)}$ with probability $\alpha(u^{(k)}, w^{(k)})$, otherwise $u^{(k+1)} = u^{(k)}$. }
\ENDWHILE
\end{algorithmic}
\end{algorithm}

The value $\xi^{(k)}$ is a sample from the prior $\mu_0$. If the eigenvalues
and eigenvectors of $L$ are all known then the Karhunen-Loeve 
expansion \eqref{eq:KL} gives 
\begin{equation}
\label{eq:KL}
\xi^{(k)}=c^{\frac12}\sum_{j=1}^{N-1} \lambda_j^{-\frac12} q_j z_j,
\end{equation}
where $c$ is given by \eqref{eq:c2}, the $z_j, j = 1\dots N-1$ are i.i.d 
centred unit Gaussians and the equality is in law.

\subsection{Spectral Projection}

For graphs with a large number of nodes $N$, it is prohibitively
costly to directly sample from the distribution $\mu_0$, since doing
so involves knowledge of a complete eigen-decomposition of $L$, in
order to employ \eqref{eq:KL}. A method that is frequently used in 
classification tasks is to restrict the support of $u$ to the eigenspace 
spanned by the first $\ell$
eigenvectors with the smallest non-zero eigenvalues of $L$ (hence
largest precision)
and this idea may be used to approximate the pCN method; this leads to a low
rank approximation. In particular
we approximate samples from $\mu_0$ by
\begin{equation}
\label{eq:KL2}
\xi^{(k)}_{\ell}=c_\ell^{\frac12}\sum_{j=1}^{\ell - 1} \lambda_j^{-\frac12} q_jz_j,
\end{equation}
where $c_\ell$ is given by \eqref{eq:c2} truncated after $j=\ell-1$, the $z_j$ are i.i.d 
centred unit Gaussians and the equality is in law.
This is a sample from $\cN(0,C_{\ell})$ where 
$C_{\ell}=c_{\ell} Q\Sigma_{\ell}Q^*$
and the diagonal entries of $\Sigma_{\ell}$ are set to zero for the entries
after $\ell.$
In practice, to implement this algorithm, it is only necessary to compute
the first $\ell$ eigenvectors of the graph Laplacian $L$.
This gives Algorithm \ref{alg:pcneig}.

\begin{algorithm}[!ht]
\caption{pCN Algorithm With Spectral Projection}
\label{alg:pcneig}
\begin{algorithmic}[1]
\STATE{Input: $L$. $\Phi(u)$. $u^{(0)} \in U$.}
\STATE{Output: $M$ Approximate samples from the posterior distribution}
\STATE{Define: $\alpha(u,w) = \min\{1, \exp(\Phi(u) - \Phi(w)\}$.} 
\WHILE{$k < M$ }
\STATE{ $w^{(k)} = \sqrt{1 - \beta^2}u^{(k)} + \beta \xi^{(k)}_{\ell}$, where $\xi^{(k)}_{\ell} \sim \cN(0, C_{\ell})$ via equation (\ref{eq:KL2}). }
\STATE{Calculate acceptance probability $\alpha(u^{(k)}, w^{(k)})$.}
\STATE{Accept $w^{(k)}$ as $u^{(k+1)}$ with probability $\alpha(u^{(k)}, w^{(k)})$, otherwise $u^{(k+1)} = u^{(k)}$. }
\ENDWHILE
\end{algorithmic}
\end{algorithm}

\subsection{Spectral Approximation}
Spectral projection often leads to good classification results, but may lead 
to reduced posterior variance and a posterior distribution that is overly smooth on the graph domain. 
We propose an improvement on the method that preserves the variability of the 
posterior distribution but still only involves calculating the first $\ell$
eigenvectors of $L.$ This is based on the empirical observation that in
many applications the spectrum of $L$ saturates and satisfies, for $j \ge \ell$, 
$\lambda_j \approx \bar{\lambda}$ for some $\bar{\lambda}$. Such behaviour
may be observed in b), c) and d) of Figure \ref{fig:spectra-all}; 
in particular note that in the hyperspectal case $\ell \ll N$. 
We assume such behaviour in deriving the low rank approximation used
in this subsection. (See Appendix for a detailed discussion of the graph Laplacian spectrum.)
We define $\Sigma_{\ell,o}$ by overwriting the diagonal
entries of $\Sigma$ from $\ell$ to $N-1$ with $\bar{\lambda}^{-1}.$ 
We then set $C_{\ell,o}=c_{\ell,o}Q \Sigma_{\ell,o}Q^*$, 
and generate samples from $\cN(0,C_{\ell,o})$ (which are 
approximate samples from $\mu_0$) by setting
\begin{equation}
\label{eq:KL3}
\xi^{(k)}_{\ell,o}=c_{\ell, o}^{\frac12}\sum_{j=1}^{\ell-1} \lambda_j^{-\frac12} q_jz_j+ c_{\ell, o}^{\frac12}\bar{\lambda}^{-\frac12} \sum_{j=\ell}^{N-1}q_jz_j,
\end{equation}
where $c_{\ell, o}$ is given by \eqref{eq:c2} with $\lambda_j$ replaced
by $\bar{\lambda}$ for $j\geq \ell$, the $\{z_j\}$ are centred
unit Gaussians, and the equality is in law.
Importantly samples according to \eqref{eq:KL3} can be
computed very efficiently. In particular there is no need to compute $q_j$ 
for $j\geq \ell$, and the quantity $\sum_{j=\ell}^{N-1}q_jz_j$ can be computed 
by first taking a sample $\bar{z} \sim \cN(0, I_N)$, and then projecting 
$\bar{z}$ onto $U_{\ell}:={\rm span}(q_\ell, \dots, q_{N-1})$. Moreover, projection onto 
$U_{\ell}$ can be computed only using $\{q_1, \dots, q_{\ell-1}\}$, since 
the vectors span the orthogonal complement of $U_{\ell}$. Concretely, we have 
$$\sum_{j=\ell}^{N-1}q_jz_j = \bar{z} - \sum_{j=1}^{\ell-1} q_j\langle q_j, \bar{z}\rangle,$$ where $\bar{z} \sim \cN(0, I_N)$ and equality is in law. 
Hence the samples $\xi^{(k)}_{\ell,o}$ can be computed by  
\begin{equation}
\label{eq:Eigfill-Samples}
\xi^{(k)}_{\ell,o}=c_{\ell,o}^{\frac12}\sum_{j=1}^{\ell-1} \lambda_j^{-\frac12} q_jz_j+ c_{\ell,o}^{\frac12}\bar{\lambda}^{-\frac12}\Bigl(\bar{z} - \sum_{j=1}^{\ell-1} q_j\langle q_j, \bar{z}\rangle\Bigr).
\end{equation}
The vector $\xi^{(k)}_{\ell,o}$ is a sample from $\cN(0,C_{\ell,o})$ 
and results in Algorithm \ref{alg:pcneigfill}. 
Under the stated empirical properties of the graph Laplacian,
we expect this to be a better approximation of the prior covariance structure
than the approximation of the previous subsection.

\begin{figure}[!ht]
\centering
\subfloat[MNIST49]{\includegraphics[width=35mm]{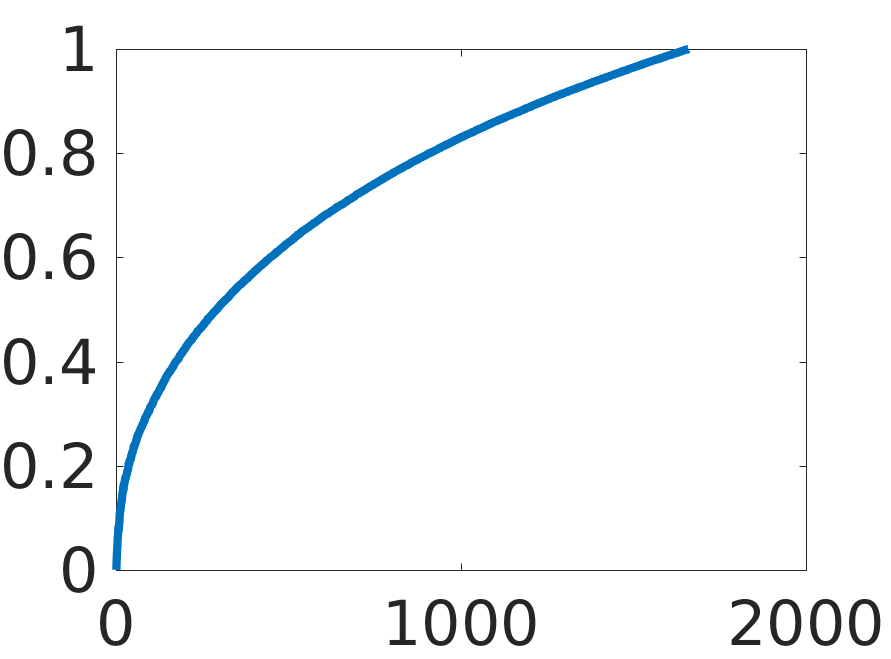}}
\subfloat[Two Moons]{\includegraphics[width=35mm]{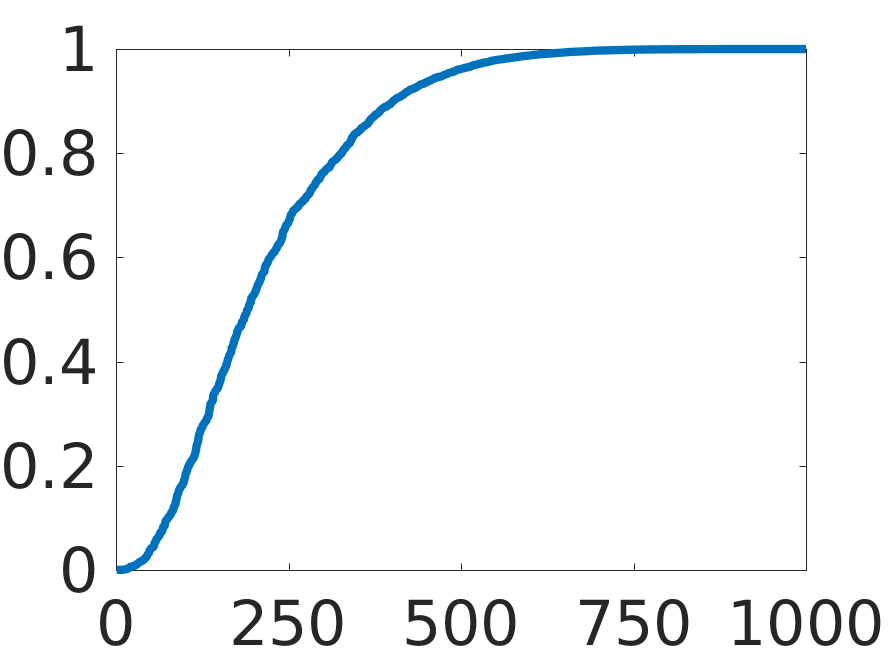}}
\subfloat[Hyperspectral]{\includegraphics[width=35mm]{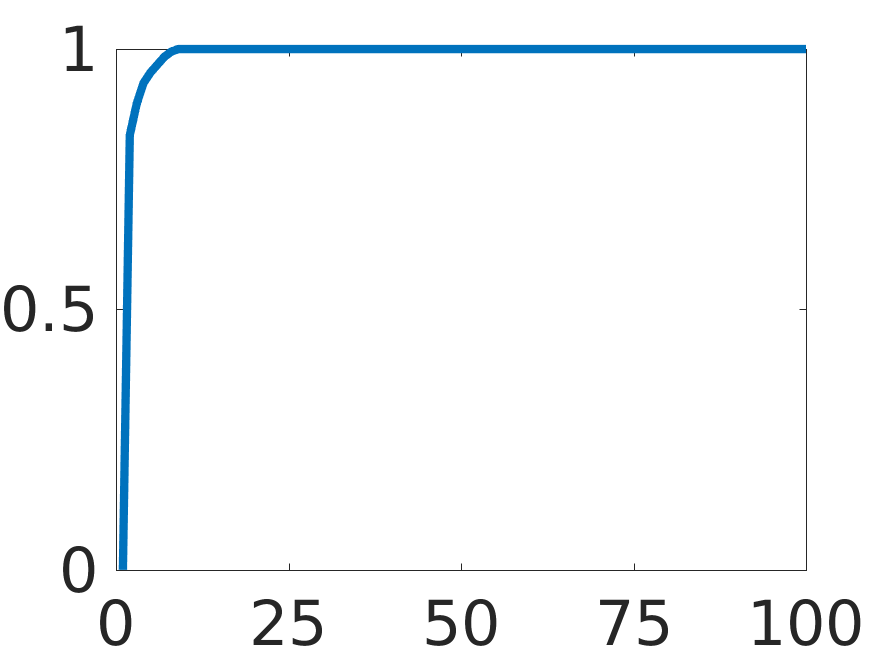}}
\subfloat[Voting Records]{\includegraphics[width=35mm]{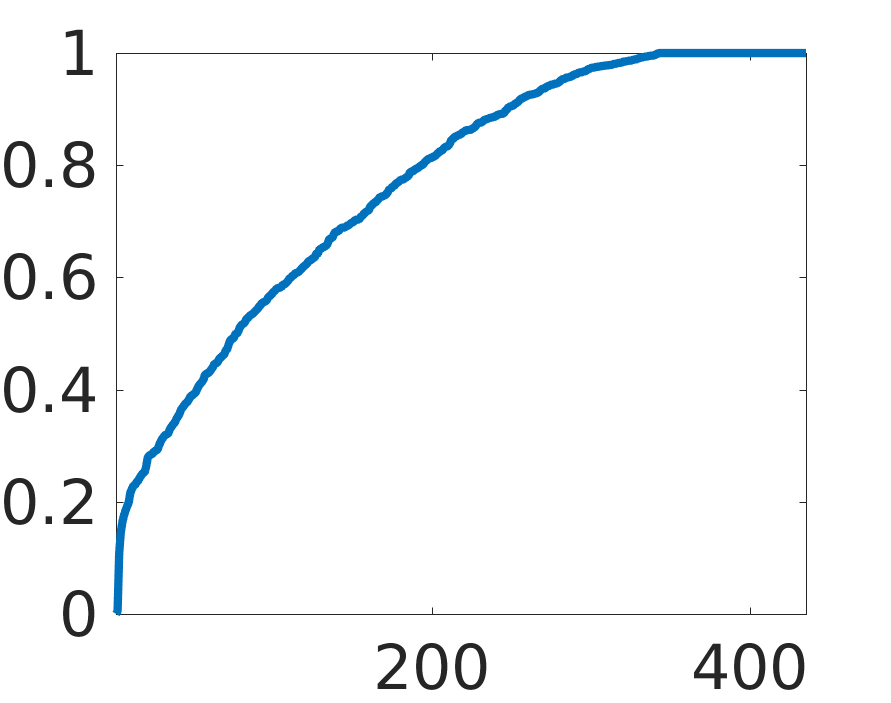}}
\caption{Spectra of graph Laplacian of various datasets. See Sec.\ref{sec:5} for the description of the datsets and graph construction parameters. The $y-$axis are the eigenvalues and the $x-$axis the index of ordering}\label{fig:spectra-all}
\end{figure}

\begin{algorithm}[!ht]
\caption{pCN Algorithm With Spectral Approximation}
\label{alg:pcneigfill}
\begin{algorithmic}[1]
\STATE{Input: $L$. $\Phi(u)$. $u^{(0)} \in U$.}
\STATE{Output: $M$ Approximate samples from the posterior distribution}
\STATE{Define: $\alpha(u,w) = \min\{1, \exp(\Phi(u) - \Phi(w)\}$.} 
\WHILE{$k < M$ }
\STATE{ $w^{(k)} = \sqrt{1 - \beta^2}u^{(k)} + \beta \xi^{(k)}_{\ell,o}$, where $\xi^{(k)}_{\ell, o} \sim \cN(0, C_{\ell,o})$ via equation (\ref{eq:Eigfill-Samples}). }
\STATE{Calculate acceptance probability $\alpha(u^{(k)}, w^{(k)})$.}
\STATE{Accept $w^{(k)}$ as $u^{(k+1)}$ with probability $\alpha(u^{(k)}, w^{(k)})$, otherwise $u^{(k+1)} = u^{(k)}$. }
\ENDWHILE
\end{algorithmic}
\end{algorithm}

\subsection{MAP Estimation: Optimization}
Recall that the objective function for the MAP estimation has the form 
$\frac12 \langle u, Pu \rangle + \Phi(u)$, where $u$ is supported on the 
space $U$. For Ginzburg-Landau and probit, the function $\Phi$ is smooth, 
and we can use a standard projected gradient method for the optimization. 
Since $L$ is typically ill-conditioned, it is preferable to use a semi-implicit
discretization as suggested in \cite{bertozzi2012diffuse}, as convergence to 
a stationary point can be shown under a graph independent learning rate. 
Furthermore, the discretization can be performed in terms of the eigenbasis 
$\{q_1, \dots, q_{N-1}\}$, which allows us to easily apply spectral projection 
when only a truncated set of eigenvectors is available. We state the algorithm 
in terms of the (possibly truncated) eigenbasis below. 
Here $P_{\ell}$ is an approximation to $P$ found by setting
$P_{\ell}=Q_{\ell} D_{\ell}  Q_{\ell}^*$ where
$Q_{\ell}$ is the matrix with columns $\{q_1, \cdots, q_{\ell-1}\}$
and $D_{\ell} = {\rm diag}(d)$ for $d(j)=c_{\ell}\lambda_j$, 
$j=1,\cdots, \ell-1.$ Thus $P_{N-1}=P.$

\begin{algorithm}[!ht]
\caption{Linearly-Implicit Gradient Flow with Spectral Projection}
\label{alg:spectralproj}
\begin{algorithmic}[1]
\STATE{\textbf{Input}: $Q_m = (q_1, \dots q_m)$, $\Lambda_m = (\lambda_1, \dots, \lambda_m)$,  $\Phi(u)$}, $u^{(0)} \in U$. 
\WHILE{$k < M$ }
\STATE{ $u^{(\star)} = u^{(k)} - \beta \nabla \Phi(u^{(k)})$}
\STATE{ $u^{(k+1)} = (I+\beta P_{m})^{-1}u^{(\star)}$}
\ENDWHILE
\end{algorithmic}
\end{algorithm}

\section{Numerical Experiments}
\label{sec:5}

%
In this section we conduct a series of numerical experiments on 
four different data sets that are representative of the field of
graph semi-supervised learning. There are three main purposes for the 
experiments. First we perform uncertainty quantification, as 
explained in subsection \ref{ssec:uq}. 
Secondly, we study the spectral approximation and projection variants on
pCN sampling as these scale well to massive graphs.
Finally we make some observations about the cost and practical
implementation details of these methods, for
the different Bayesian models we adopt; these
will help guide the reader in making choices about which algorithm to use.  We present the results for MAP estimation in Section B of the Appendix, alongside the proof of convexity of the
probit MAP estimator. 

The quality of the graph constructed from the feature vectors
is central to the performance of any graph learning algorithms. In the experiments below, we follow the graph construction procedures 
used in the previous papers 
\cite{bertozzi2012diffuse, hu2015multi, merkurjev2013mbo} which applied graph semi-supervised learning to all of the datasets that we consider in this paper.
Moreover, we have verified that for all the reported experiments below, 
the graph parameters are in a range such that 
spectral clustering \cite{von2007tutorial} (an unsupervised learning method)
gives a reasonable performance. The methods we employ
lead to refinements over spectral clustering (improved classification) and, 
of course, to uncertainty quantification (which spectral clustering
does not address).

\subsection{Data Sets}
\label{subsec:datasets}
We introduce the data sets and  describe the graph construction for each data 
set. In all cases we numerically construct the weight matrix $A$, and then
the graph Laplacian $L$.\footnote{The weight matrix $A$ is symmetric in 
theory; in practice we find that symmetrizing via the 
map $A \mapsto \frac{1}{2}A + \frac{1}{2}A^*$ is helpful.}

\subsubsection{Two Moons}
\label{subsubsec:twomoons}
The two moons artificial data set is constructed to give noisy data which
lies near a nonlinear low dimensional manifold embedded in a high dimensional
space \cite{buhler2009spectral}. The data set is constructed by
sampling $N$ data points uniformly from two semi-circles centered 
at $(0, 0)$ and $(1, 0.5)$ with radius $1$, 
embedding the data in $\mathbb{R}^d$, and adding Gaussian noise with 
standard deviation $\sigma.$ We set $N = 2,000$ and $d = 100$ in this paper;
recall that the graph size is $N$ and each feature vector has length $d$.
We will conduct a variety of experiments with different labelled
data size $J$, and in particular study variation with $J$.
The default value, when not varied, is $J$ at $3\%$ of $N$, with the
labelled points chosen at random.

We take each data point as a node on the graph, and construct a fully 
connected graph using the self-tuning weights of 
Zelnik-Manor and Perona \cite{zelnik2004self}, with $K$ = 10. 
Specifically we let $x_i$, $x_j$ be the coordinates of the data points 
$i$ and $j$. Then weight $a_{ij}$ between nodes $i$ and $j$ is defined by 
\begin{equation}
\label{eq:wz}
a_{ij} = \exp \Bigl(-\frac{\|x_i - x_j\|^2}{2 \tau_i \tau_j}\Bigr), 
\end{equation}
where $\tau_j$ is the distance of the $K$-th closest point to the node $j$. 

\subsubsection{House Voting Records from 1984}
This dataset contains the voting records of 435 U.S. House of Representatives;
for details see \cite{bertozzi2012diffuse} and the references therein.
The votes were recorded in 1984 from the $98^{th}$ United States Congress, 
$2^{nd}$ session. The votes for each individual is vectorized by mapping a yes 
vote to $1$, a no vote to $-1$, and an abstention/no-show to $0$. The
data set contains $16$ votes that are believed to be well-correlated 
with partisanship, and we use only these votes as feature 
vectors for constructing the graph. Thus the graph size is $N = 435$, 
and feature vectors have length $d = 16$. 
The goal is to predict the party affiliation of each individual, given a
small number of known affiliations (labels). We pick 
$3$ Democrats and $2$ Republicans at random to use as the observed class 
labels; thus $J=5$ corresponding to less than $1.2\%$ of fidelity 
(i.e. labelled) points.  
We construct a fully connected graph with weights given by
\eqref{eq:wz} with $\tau_j = \tau= 1.25$ for all nodes $j.$ 

\subsubsection{MNIST}
The MNIST database consists of
$70,000$ images of size $28 \times 28$ pixels 
containing the handwritten digits $0$ through $9$; see
\cite{lecun1998mnist} for details. Since in this paper we focus on 
binary classification, we only consider pairs of digits. To speed up 
calculations, we subsample randomly $2,000$ images from each digit to form a graph 
with $N = 4,000$ nodes; we use this for all our experiments except in
subsection \ref{ssec:CM} where we use the full data set of size 
$N={\cal O}(10^4)$ for digit pair $(4,9)$ to benchmark computational cost. 
The nodes of the graph are the images and as feature vectors we project the images onto the leading 50 principal components given by PCA;
thus the feature vectors at each node have length $d=50.$
We construct a $K$-nearest neighbor graph with $K = 20$ for each pair of digits considered. Namely, the weights $a_{ij}$ are non-zero if and only if one of $i$ or $j$ is in the $K$ nearest neighbors of the other.  The non-zero weights are set using \eqref{eq:wz} with $K=20$.

We choose the 
four pairs $(5,7)$, $(0, 6)$, $(3, 8)$ and  $(4,9)$. These four
pairs exhibit increasing levels of difficulty for classification.
This fact is demonstrated in Figures \ref{fig:mnist1} - \ref{fig:mnist4}, where we visualize the datasets by projecting the dataset onto the second and third eigenvector of the graph Laplacian. Namely, each node $i$ is mapped to the point $(Q(2, i), Q(3, i))\in \mathbb{R}^2$, where $L = Q \Lambda Q^*$.

\begin{figure}[!ht]
\centering
\subfloat[(4, 9)]{\label{fig:mnist1}\includegraphics[width=33mm]{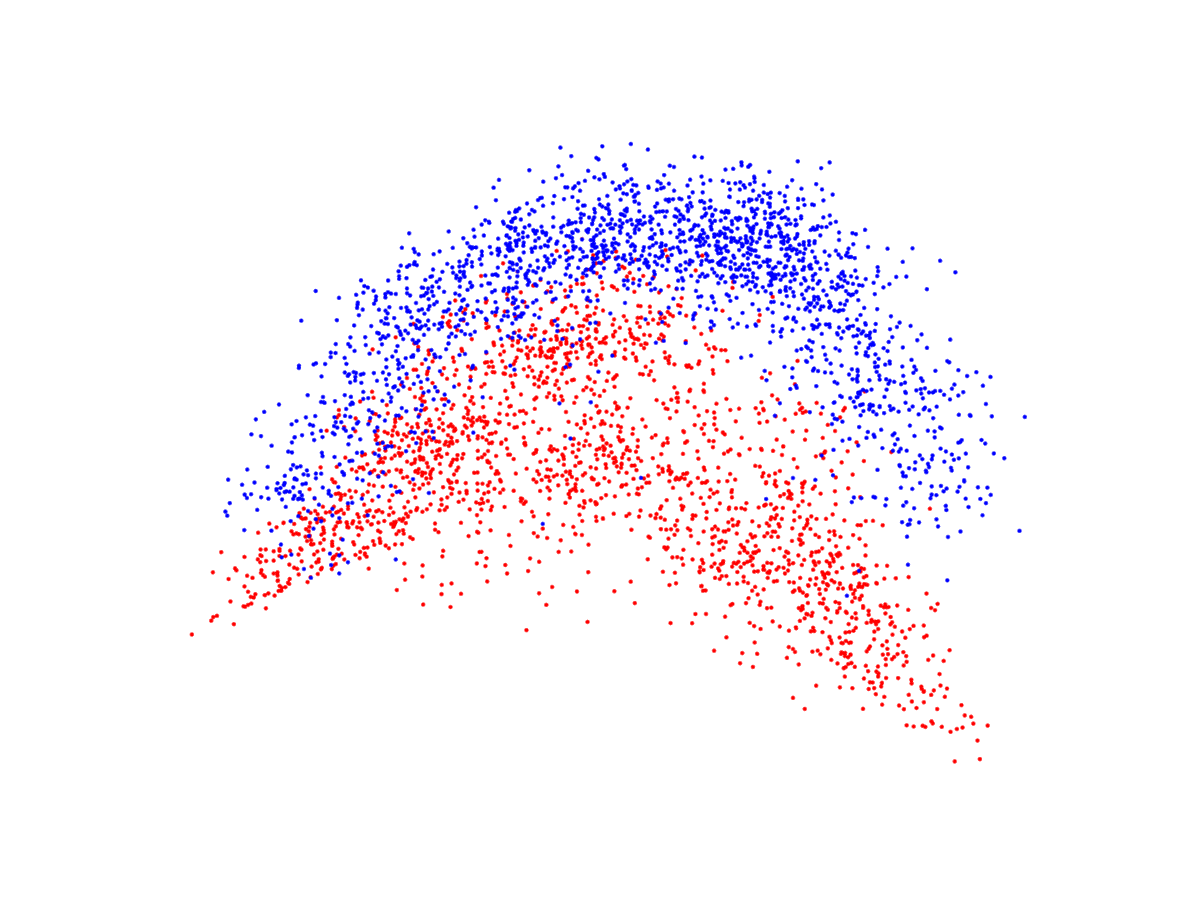}}
\subfloat[(3, 8)]{\label{fig:mnist2}\includegraphics[width=33mm]{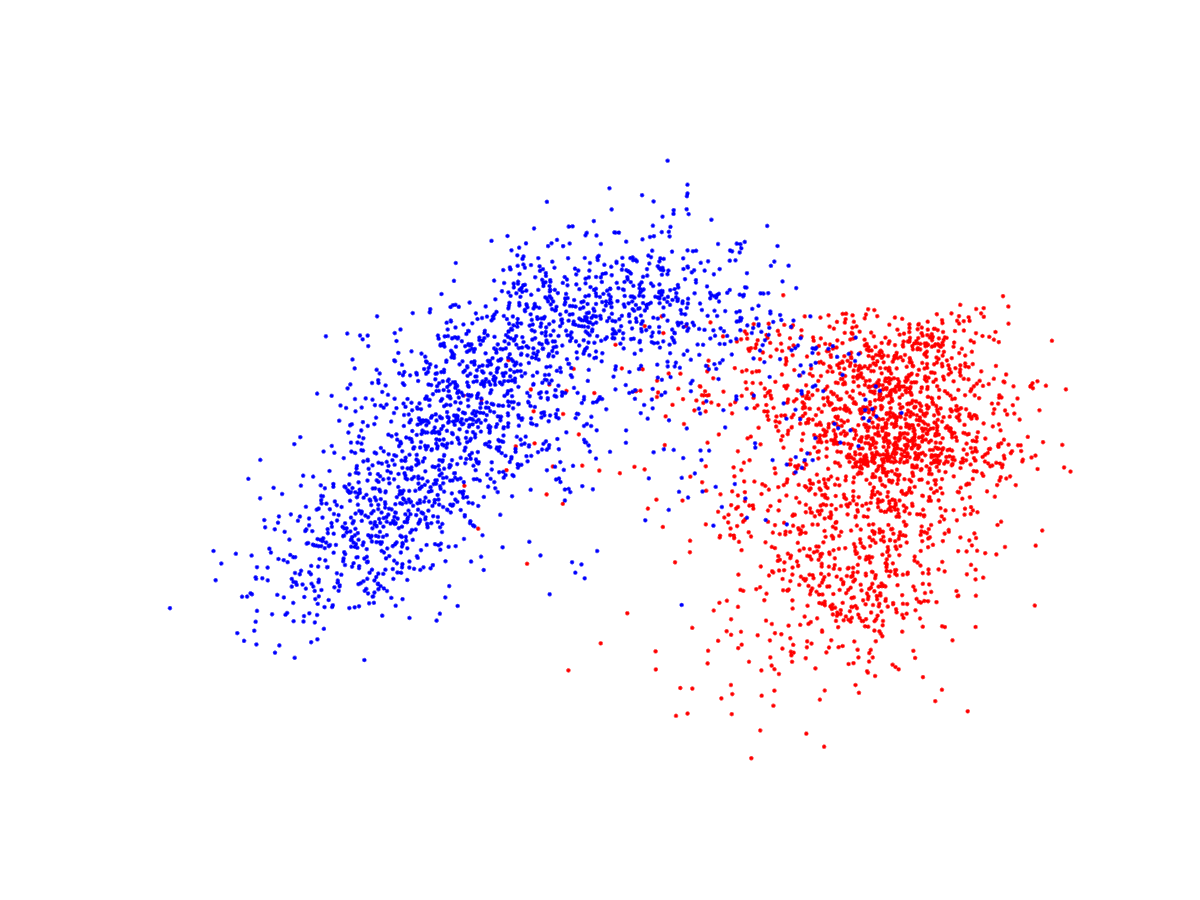}}
\subfloat[(0, 6)]{\label{fig:mnist3}\includegraphics[width=33mm]{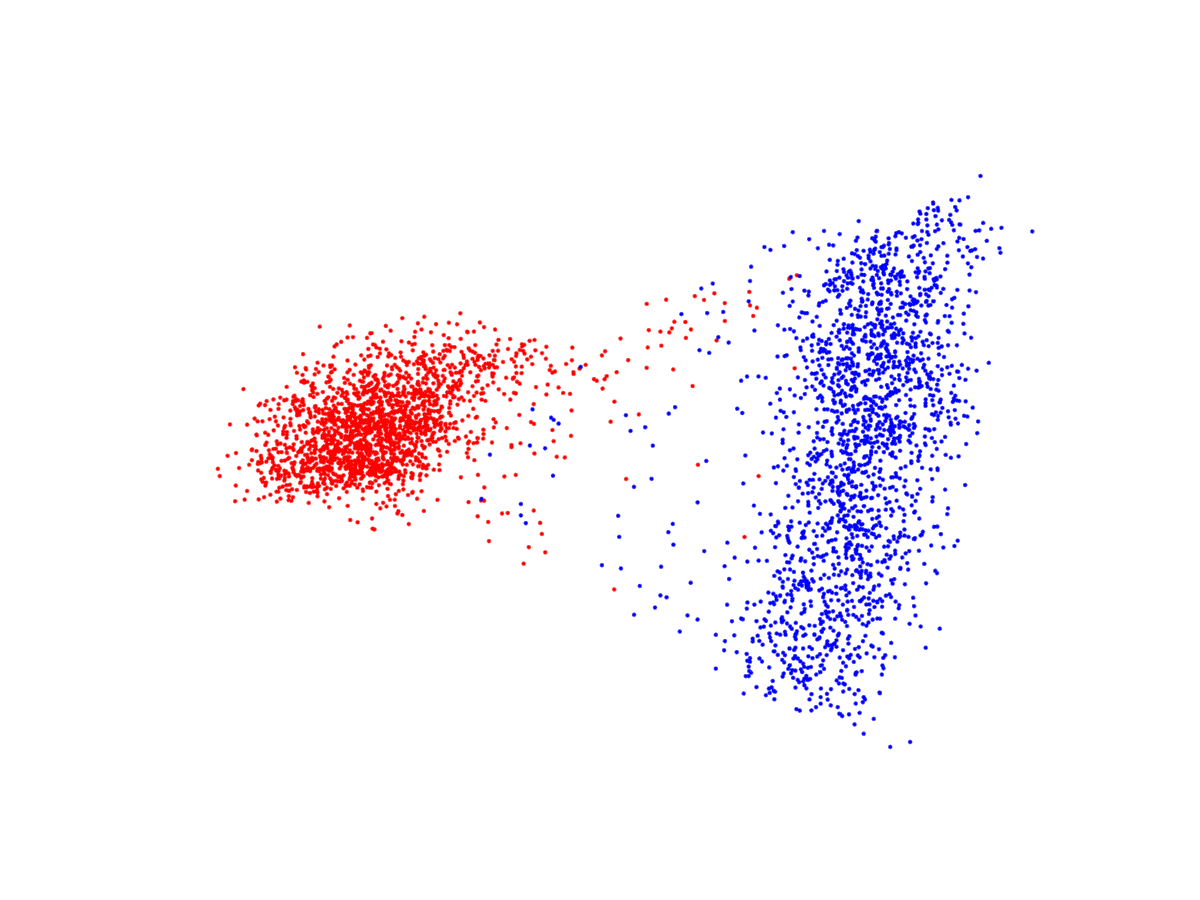}}
\subfloat[(5, 7)]{\label{fig:mnist4}\includegraphics[width=33mm]{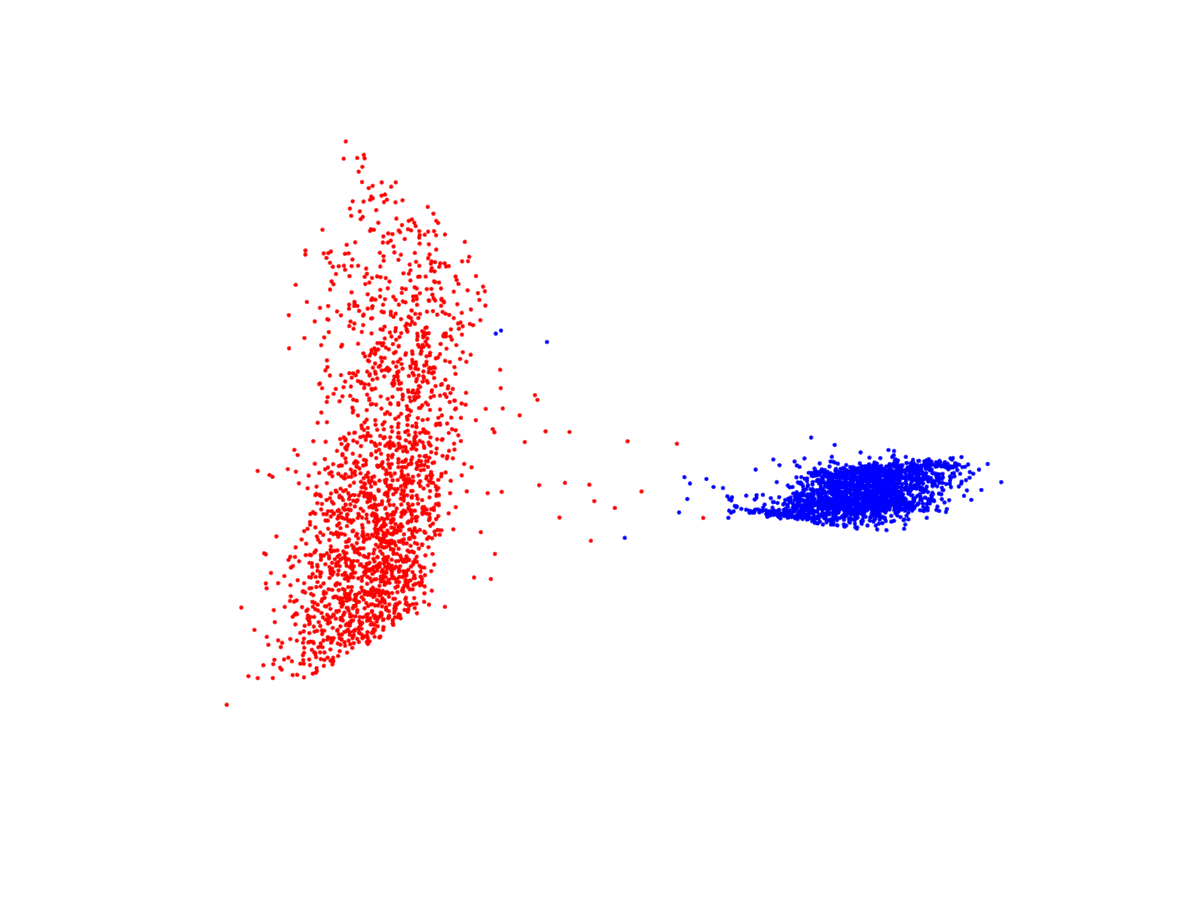}}
\caption{Visualization of data by projection onto $2^{nd}$ and $3^{rd}$ eigenfuctions of the graph Laplacian for the MNIST data set, where the vertical dimension is the $3^{rd}$ eigenvector and the horizontal dimension the $2^{nd}$. Each subfigure represents a different pair of digits. We construct a 20 nearest neighbour graph under the Zelnik-Manor and Perona scaling \cite{zelnik2004self} as in \eqref{eq:wz} with $K = 20$. }
\end{figure}

\subsubsection{HyperSpectral Image}
The hyperspectral data set analysed for this project was provided by the Applied Physics Laboratory at Johns Hopkins University; see
\cite{broadwater2011primer} for details.
It consists of a series of video sequences recording the release of 
chemical plumes taken at the Dugway Proving Ground. 
Each layer in the spectral dimension depicts a particular frequency starting 
at $7,830$ nm and ending with $11,700$ nm, with a channel spacing of $30$ nm,
giving $129$ channels; thus the feature vector has length $d = 129.$ 
The spatial dimension of each frame is $128 \times 320$ pixels.
We select $7$ frames from the video sequence as the input data, and consider each 
spatial pixel as a node on the graph. Thus the graph size
is $N = 128 \times 320 \times 7=286,720$. Note that time-ordering of
the data is ignored. The classification problem is to classify pixels 
that represent the chemical plumes against pixels that are the background. 

We construct a fully connected graph with weights given by the cosine distance:
$$w_{ij} = \frac{\langle x_i, x_j \rangle}{\|x_i\|\|x_j\|}.$$ 
This distance is small for vectors that point in the same direction,
and is insensitive to their magnitude. We consider the 
normalized Laplacian defined in \eqref{eq:s-lap}. Because it is
computationally prohibitive to compute eigenvectors of a Laplacian of this size, 
we apply the \nyst extension \cite{williams2000using,fowlkes2004spectral} 
to obtain an approximation to the true eigenvectors and eigenvalues;
see \cite{bertozzi2012diffuse} for details pertinent to the set-up here. 
We emphasize that each pixel in the $7$ frames is a node on the 
graph and that, in particular, pixels across the $7$ time-frames are also 
connected. Since we have no ground truth labels for this dataset, we generate known labels by setting the segmentation results from spectral clustering as ground truth. The default value of $J$ is $8,000$, and labels are chosen at random. This corresponds to labelling around
$2.8\%$ of the points. We only plot results for the last $3$ frames of the 
video sequence in order to ensure that the information in the figures it not
overwhelmingly large.

\subsection{Uncertainty Quantification}
In this subsection we demonstrate both
the feasibility, and value, of uncertainty quantification in
graph classification methods. We employ the probit and the Bayesian level-set model for most of the experiments in this subsection; we also employ
the Ginzburg-Landau model but since this can be slow to converge, 
due to the presence of local minima, it is only demonstrated on 
the voting records dataset. 
The pCN method is used for sampling on various datasets 
to demonstrate properties and interpretations
of the posterior. In all experiments, all statistics on the label $l$ 
are computed under the push-forward posterior measure onto label space,
$\nu$.

\subsubsection{Posterior Mean as Confidence Scores}

We construct the graph from the MNIST $(4,9)$ dataset following subsection 
\ref{subsec:datasets}. The noise variance $\gamma$ is set to $0.1$, 
and $4\%$ of fidelity points are chosen randomly from each class. The
probit posterior is used to compute \eqref{eq:posterior-mean}.
In Figure \ref{fig:mnistlowhigh} we demonstrate
that nodes with scores $s^l_j$ closer to the binary ground truth 
labels $\pm 1$ look visually more uniform than nodes with $s^l_j$ far from 
those labels. This shows that the posterior mean contains useful information
which differentiates between outliers and inliers that align with human 
perception.  The scores $s^l_j$ are computed as follows:  we let $\{u^{(k)}\}_{k=1}^M$ be a set of samples of the posterior measure obtained from the pCN algorithm. The probability $\bbP(S(u(j) = l(j))$ is approximated by 
$$\bbP\bigl(S(u(j) = l(j)\bigr) \approx \frac{1}{M}\sum_{k=1}^M\one_{u^{(k)}(j) > 0}$$ 
for each $j$.  Finally the score 
$$s^l_j = 2\bbP\bigl(S(u(j) = l(j)\bigr) - 1.$$

\begin{figure}[!ht]
\centering
\subfloat[Fours in MNIST]{\includegraphics[width=70mm]{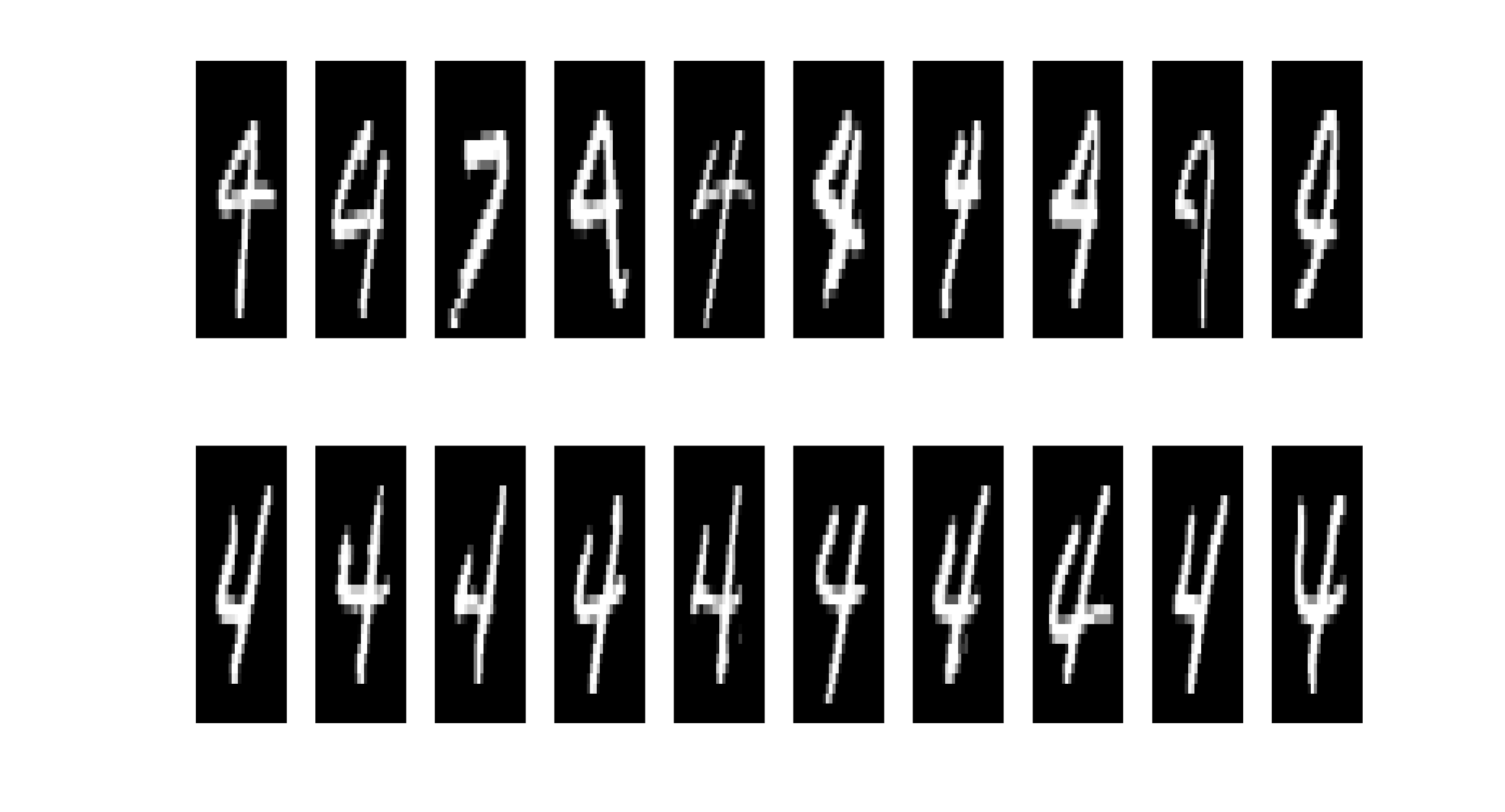}}
\subfloat[Nines in MNIST]{\includegraphics[width=70mm]{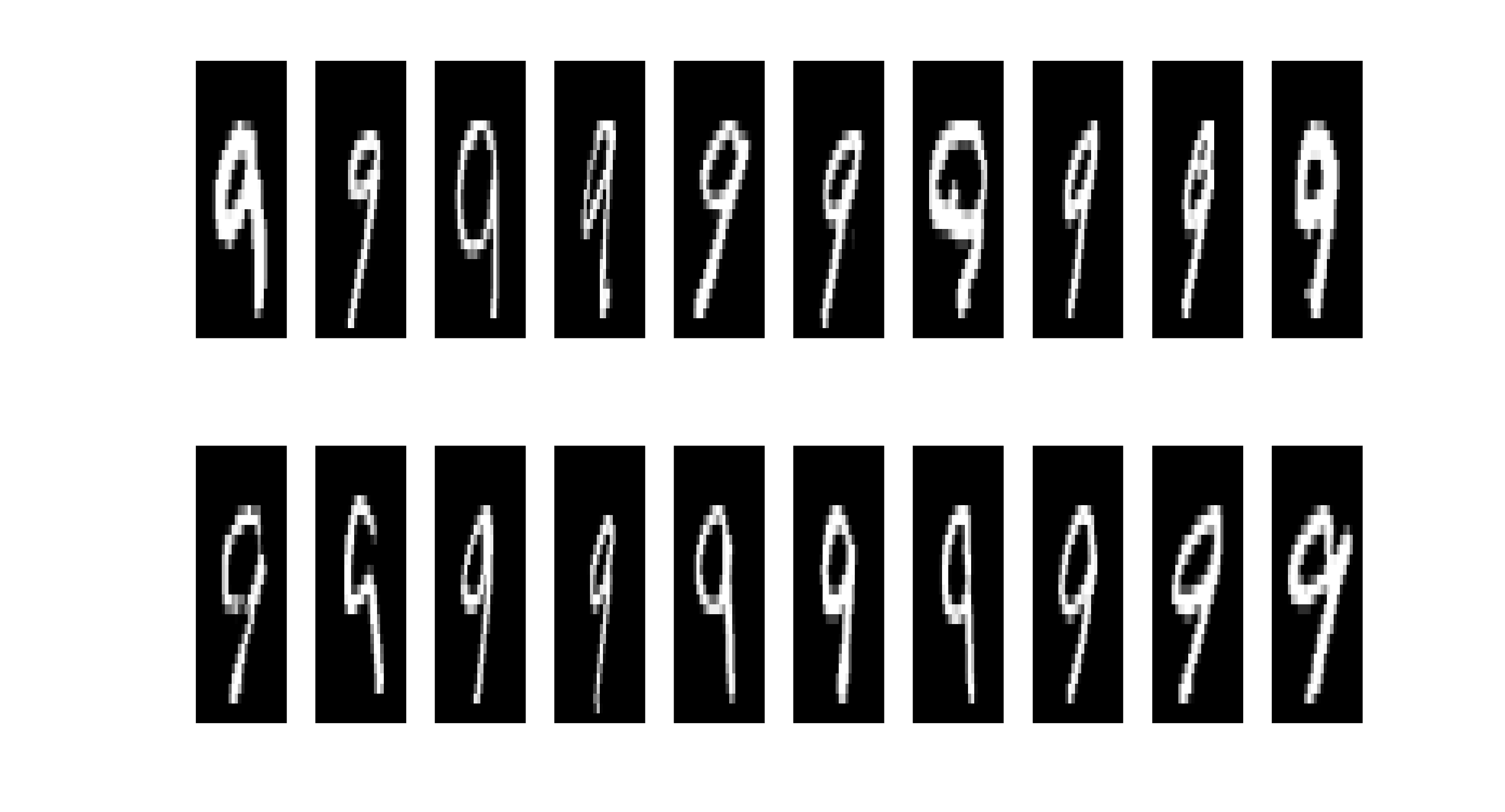}}
\caption{``Hard to classify" vs ``easy to classify" nodes in  the MNIST $(4, 9)$ dataset under the probit model. Here the digit ``4" is labeled +1 and ``9" is labeled -1. The top (bottom) row of the left column corresponds to images that have the lowest (highest) values of $s^l_j$ defined in \eqref{eq:posterior-mean} among images that have ground truth labels ``$4$". The right column is organized in the same way for images with ground truth labels $9$ except the top row now corresponds to the highest values of $s^l_j$. Higher $s^l_j$ indicates higher confidence that image $j$ is a $4$ and not a ``$9$", hence the top row could be interpreted as images that are ``hard to classify" by the current model, and vice versa for the bottom row. The graph is constructed as in Section \ref{sec:5}, and $\gamma = 0.1$, $\beta = 0.3$.  }\label{fig:mnistlowhigh}
\end{figure}

\subsubsection{Posterior Variance as Uncertainty Measure}
\label{subsubsec:feature-variance}
In this set of experiments, we show that the posterior distribution of the 
label variable $l = S(u)$ captures the uncertainty of the classification 
problem. We use the posterior variance of $l$, averaged over all nodes,
as a measure of the model variance; specifically formula \eqref{eq:mean-posterior-variance}.
We study the behaviour of this quantity as we vary the level of uncertainty 
within certain inputs to the problem. We demonstrate empirically that the 
posterior variance is approximately monotonic with respect to variations
in the levels of uncertainty in the input data, as it should be; and thus
that the posterior variance contains useful information about the
classification. 
We select quantities that reflect 
the separability of the classes in the feature space. 

Figure \ref{fig:twomoonsfeature} plots the posterior variance ${\rm Var}(l)$
against the standard deviation $\sigma$ of the noise appearing in the 
feature vectors for the two moons dataset; thus  points generated on
the two semi-circles overlap more as $\sigma$ increases. We employ
a sequence of posterior computations, using probit and Bayesian level-set, for  
$\sigma = 0.02:0.01:0.12$. 
Recall that $N=2,000$ and we choose $3\%$ of the nodes to have the ground truth 
labels as observed data. Within both models, $\gamma$ is fixed at $0.1$. 
A total of $1 \times 10^4$ samples are taken, and the proposal variance $\beta$ is set to $0.3$. We see that the mean posterior variance increases with $\sigma$, as is intuitively reasonable. Furthermore, because $\gamma$ is small,  
probit and Bayesian level-set are very similar models and this is reflected
in the similar quantitative values for uncertainty.

\begin{figure}[!ht]
\centering
\includegraphics[height=4.2cm, width=8cm]{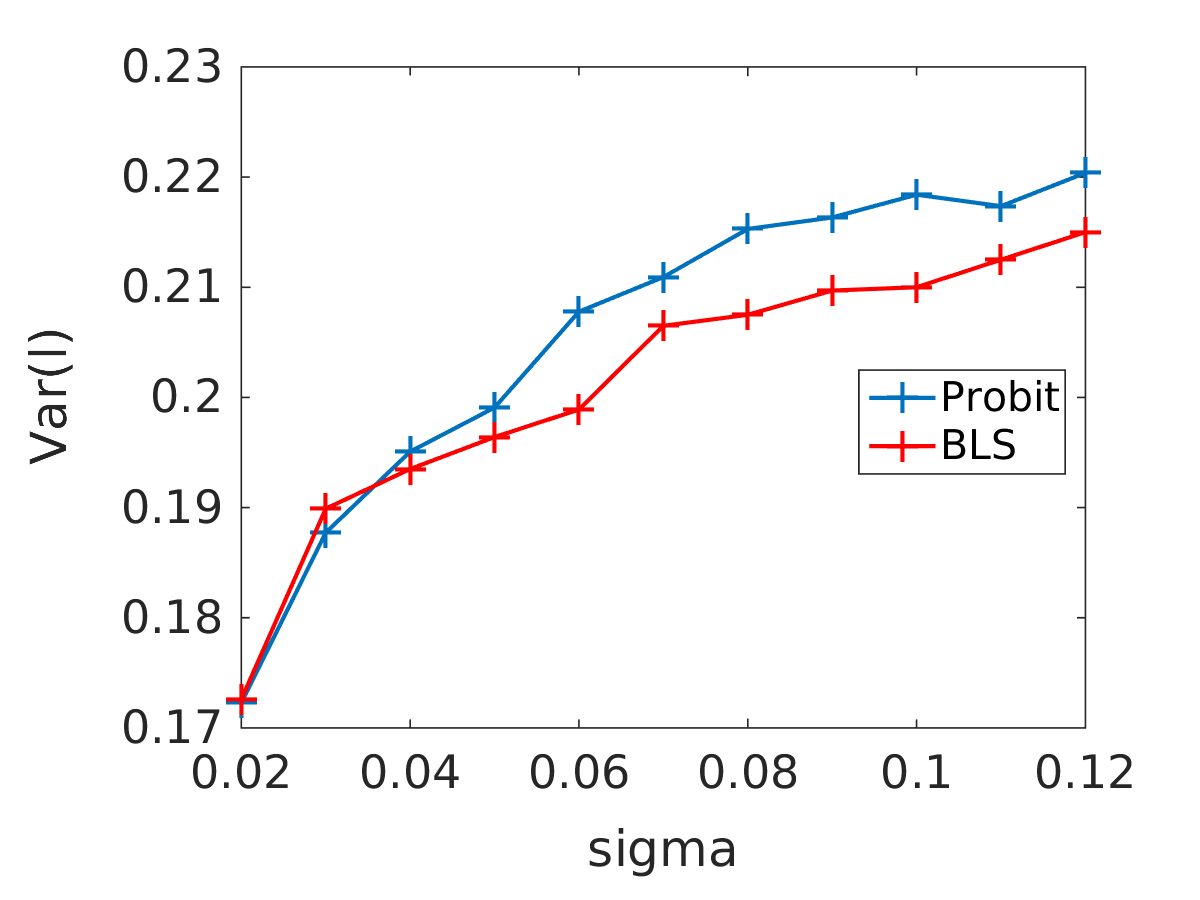}
\caption{Mean Posterior Variance defined in \eqref{eq:mean-posterior-variance} versus feature noise $\sigma$ for the probit model and the BLS model applied to the Two Moons Dataset with $N = 2,000$. For each trial, a realization of the two moons dataset under the given parameter $\sigma$ is generated, where $\sigma$ is the Gaussian noise on the features defined in Section \ref{subsubsec:twomoons} , and $3\%$ of nodes are randomly chosen as fidelity. We run $20$ trials for each value of $\sigma$, and average the mean posterior variance across the $20$ trials in the figure. We set  $\gamma = 0.1$ and $\beta = 0.3$ for both models. } \label{fig:twomoonsfeature}
\end{figure}

A similar experiment studies the posterior label variance ${\rm Var}(l)$ 
as a function of the pair of digits classified within the MNIST data set.
We choose $4\%$ of the nodes as labelled data, and set $\gamma = 0.1$. The number of samples employed
is $1 \times 10^4$ and the proposal variance $\beta$ is set to be $0.3$.
Table \ref{tab:mnistpostvar} shows the posterior label variance.
Recall that Figures \ref{fig:mnist1} - \ref{fig:mnist4} suggest that
the pairs $(4, 9), (3, 8), (0, 6), (5, 7)$ are increasingly easy to separate, 
and this is reflected in the decrease of the posterior label variance
shown in Table \ref{tab:mnistpostvar}.

\begin {table}[h!]
\centering
\begin{tabular}{|l|l|l|l|l|} \hline 
Digits 	 &	(4, 9) & (3, 8) & (0, 6)&(5, 7) \\ \hline
probit &  0.1485 & 0.1005 & 0.0429 &0.0084\\ \hline
BLS &  0.1280 & 0.1018 & 0.0489 &0.0121\\ \hline
\end{tabular}  
 \label{tab:mnistpostvar}  \caption {Mean Posterior Variance of different digit pairs for the probit model and the BLS model applied to the MNIST Dataset. The pairs are organized from left to right according to the separability of the two classes as shown in Fig.\ref{fig:mnist1} - \ref{fig:mnist4}. For each trial, we randomly select $4\%$ of nodes as fidelity. We run $10$ trials for each pairs of digits and average the mean posterior variance across trials. We set $\gamma = 0.1$ and $\beta = 0.3$ 
for both models.  }
\end{table}

The previous two experiments in this subsection have studied posterior
label variance ${\rm Var}(l)$ as a function of variation in the prior 
data.
We now turn and study how posterior variance changes as a function of varying
the likelihood information, again for both two moons and MNIST data sets. 
In Figures \ref{fig:twomoonslabelnoise1} and \ref{fig:mnistlabelnoise1}, we
plot the posterior label variance against the percentage of nodes observed. 
We observe that the observational variance decreases as the amount  
of labelled data increases. 
Figures \ref{fig:twomoonslabelnoise2} and \ref{fig:mnistlabelnoise2} show that 
the posterior label variance increases almost monotonically as observational noise $\gamma$ increases. 
Furthermore the level set and probit formulations produce similar
answers for $\gamma$ small, reflecting the close similarity between 
those methods when $\gamma$ is small -- when $\gamma=0$
their likelihoods coincide.

In summary of this subsection, the label posterior variance ${\rm Var}(l)$
behaves intuitively as expected as a function of varying the prior and
likelihood information that specify the statistical probit model 
and the Bayesian level-set model.
The uncertainty quantification thus provides useful, and consistent,
information that can be used to inform decisions made on the basis of
classifications.

\begin{figure}[!ht]
\centering
\subfloat[Two Moons]{\label{fig:twomoonslabelnoise1}\includegraphics[height=37mm, width=65mm]{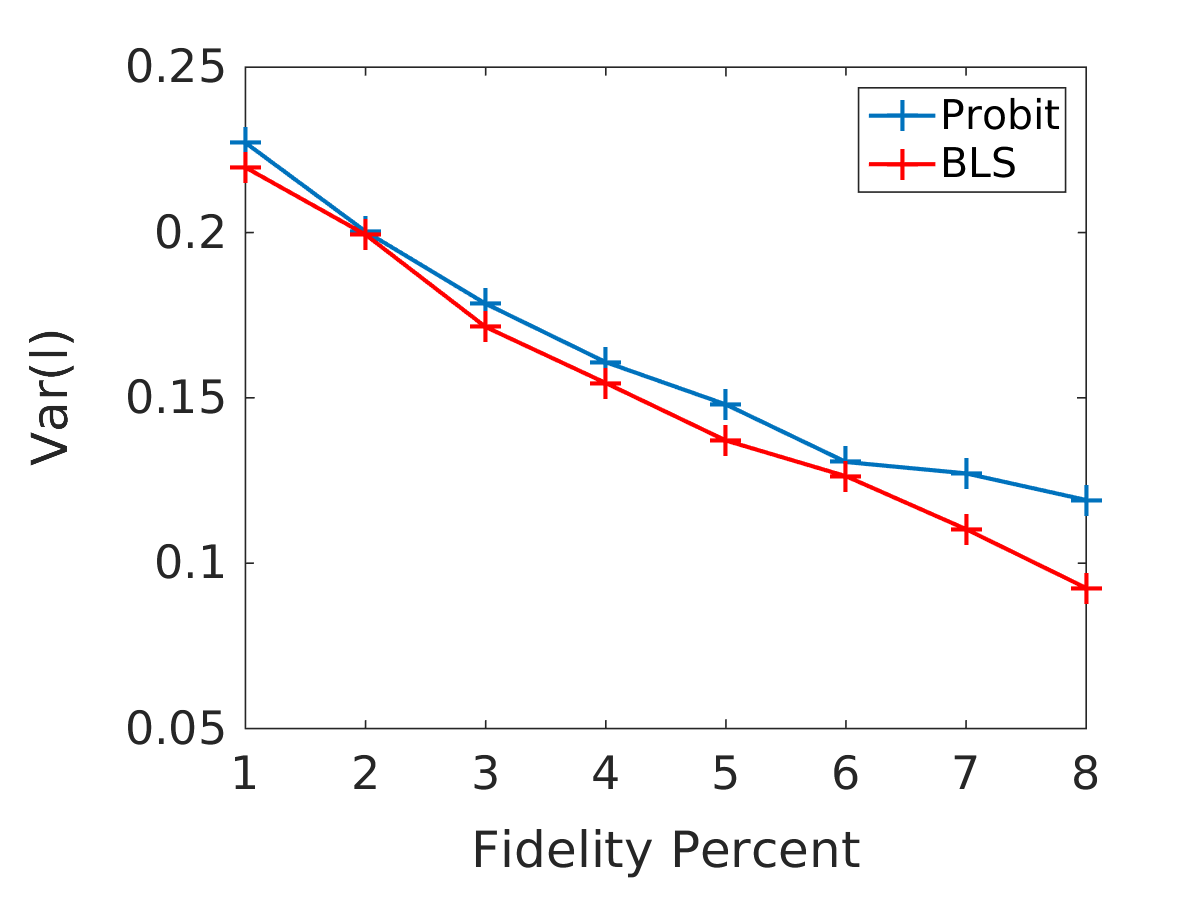}}
\subfloat[MNIST49]{\label{fig:mnistlabelnoise1}\includegraphics[height=37mm, width=65mm]{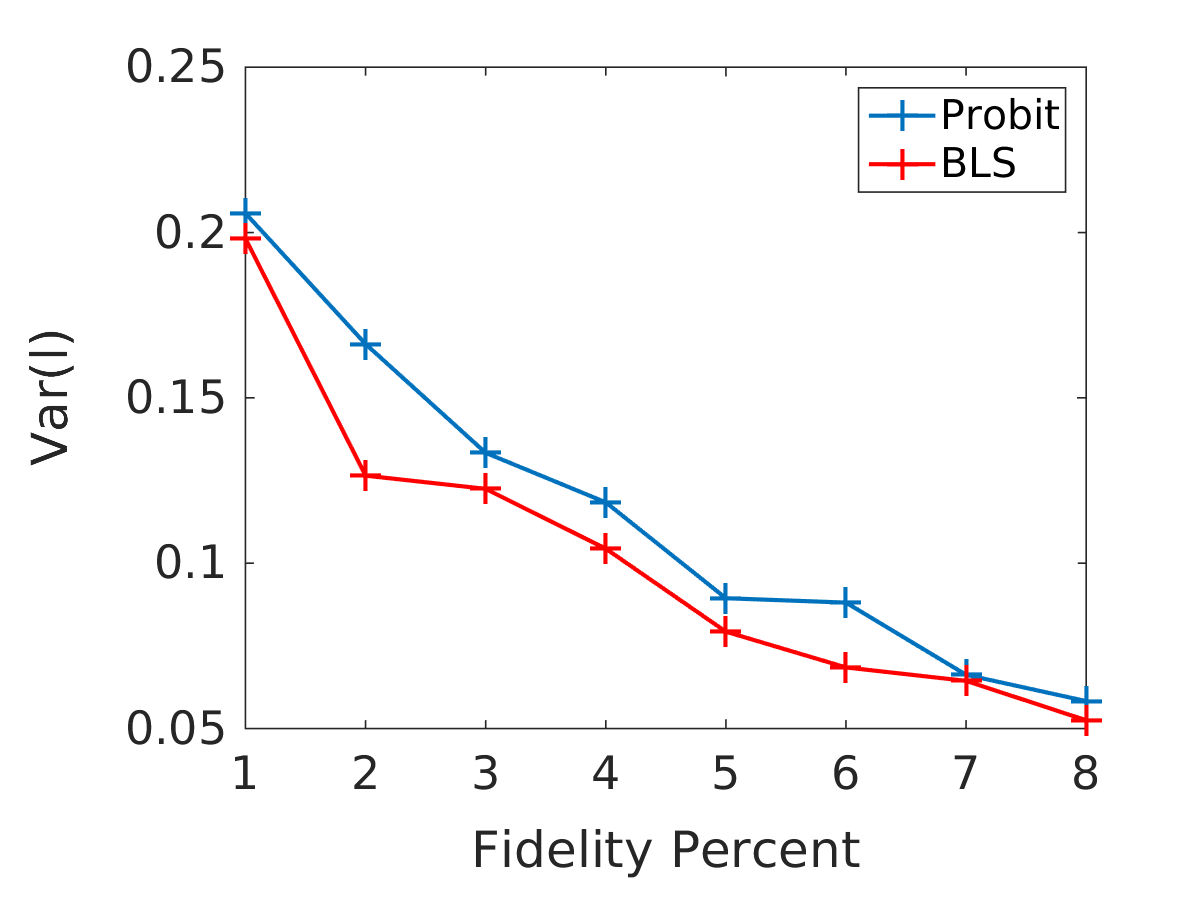}} \\
\subfloat[Two Moons]{\label{fig:twomoonslabelnoise2}\includegraphics[height=37mm, width=65mm]{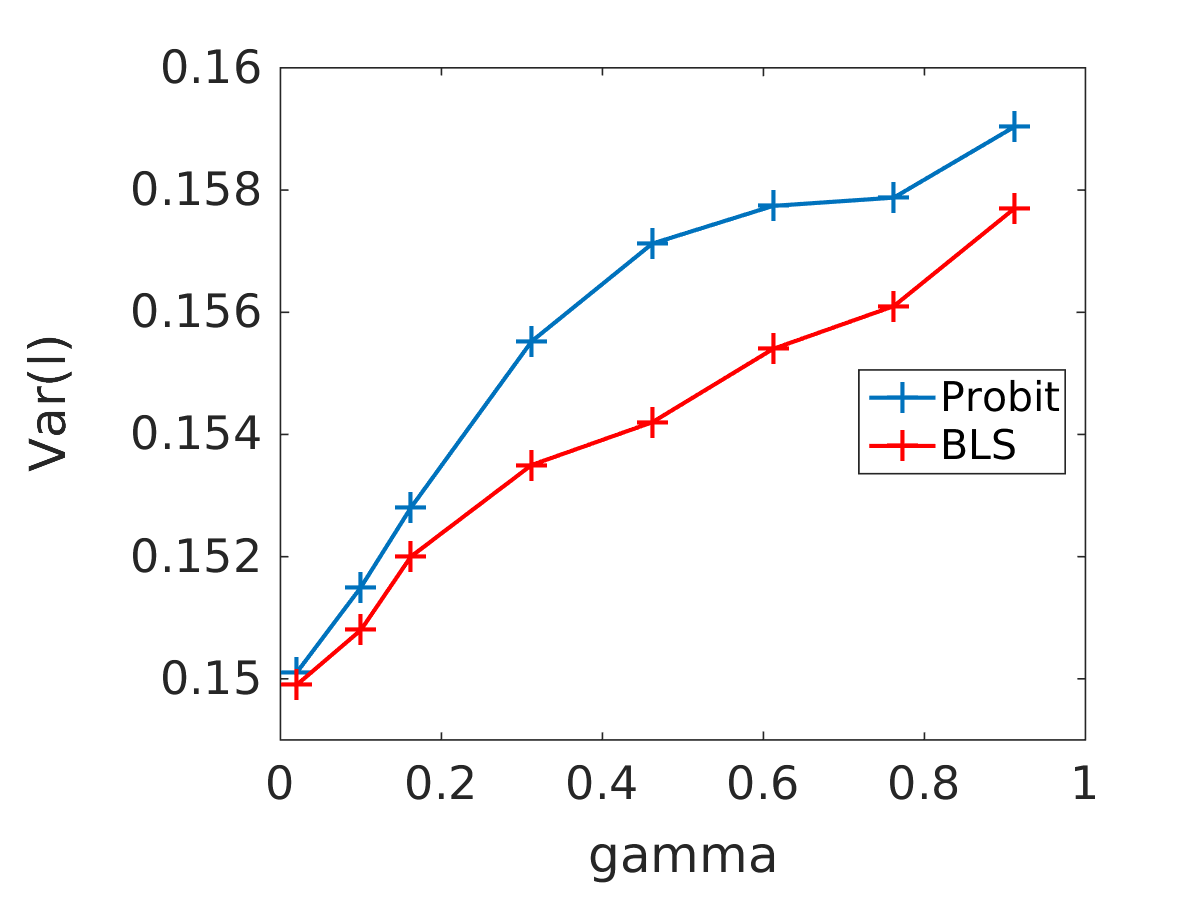}}
\subfloat[MNIST]{\label{fig:mnistlabelnoise2}\includegraphics[height=37mm, width=65mm]{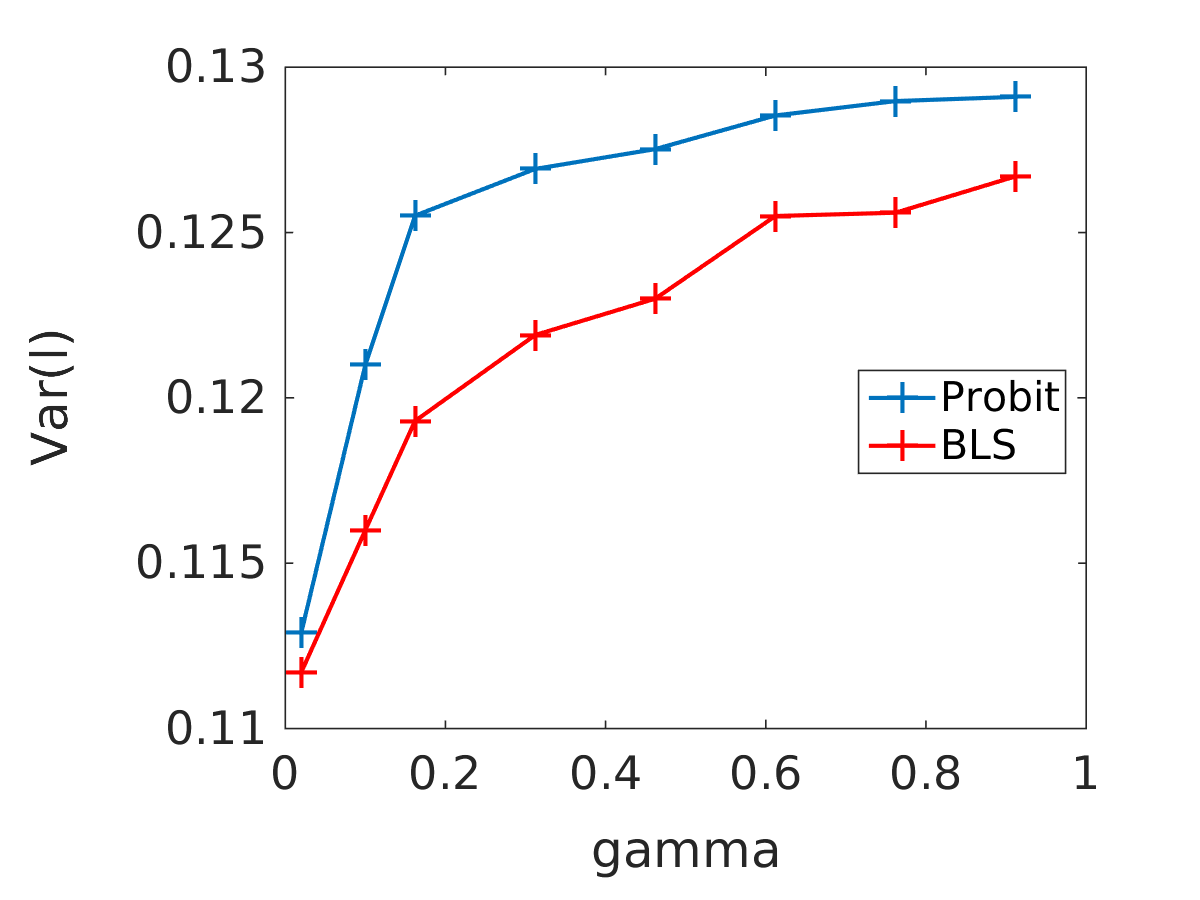}}
\caption{Mean Posterior Variance as in \eqref{eq:mean-posterior-variance} versus percentage of labelled points and noise level $\gamma$ for the probit model and the BLS model applied to the Two Moons dataset and the 4-9 MNIST dataset. For two moons, we fix $N = 2,000$ and $\sigma = 0.06$. For each trial, we generate a realization of the two moons dataset while the MNIST dataset is fixed. For a), b) $\gamma$ is fixed at $0.1$, and a certain percentage of nodes are selected at random as labelled. For c), d), the proportion of labelled points is
fixed at $4\%$, and $\gamma$ is varied across a range. Results are averaged over 20 trials. }
\end{figure}

\subsubsection{Visualization of Marginal Posterior Density}
In this subsection, we contrast the posterior distribution 
$\bbP(v|y)$ of the Ginzburg-Landau model with that of the probit and Bayesian level-set (BLS) models.
The graph is constructed from the voting records data with the fidelity points 
chosen as described in subsection \ref{subsec:datasets}. In  Figure
\ref{fig:glvsprobitmcmc} we plot the histograms of the empirical 
marginal posterior distribution on $\bbP(v(i)|y)$ and $\bbP(u(i)|y)$ 
for a selection of nodes on the graph.
For the top row of Figure \ref{fig:glvsprobitmcmc}, we select $6$ nodes with ``low confidence" predictions, and plot the empirical marginal distribution of $u$ for probit and BLS, and that of $v$ for the Ginzburg-Landau model. Note that the same set of nodes is chosen for different models. 
The plots in this row demonstrate the multi-modal nature of the 
Ginzburg-Landau distribution in contrast to the uni-modal nature of the 
probit posterior; this uni-modality is a consequence of the log-concavity of the probit likelihood. For the bottom row, we plot the same empirical distributions for 6 nodes with ``high confidence" predictions. In contrast with the top row, the Ginzburg-Landau marginal for high confidence nodes is essentially uni-modal since most samples of $v$ evaluated on these nodes have a fixed sign.



\begin{figure}[!ht]
\centering
\subfloat[Ginzburg-Landau (Low)]{\includegraphics[width=45mm]{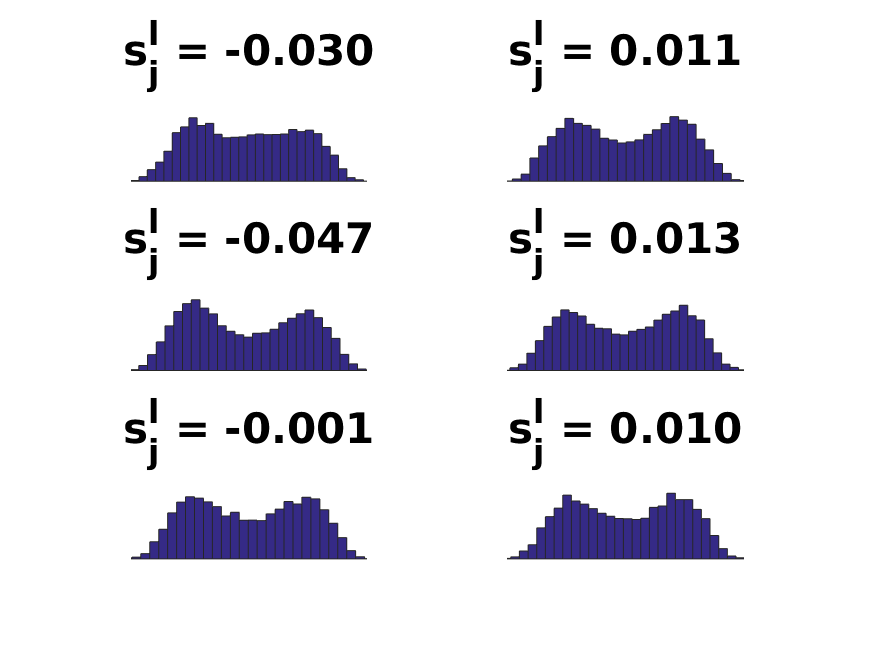}}
\subfloat[probit (Low)]{\includegraphics[width=45mm]{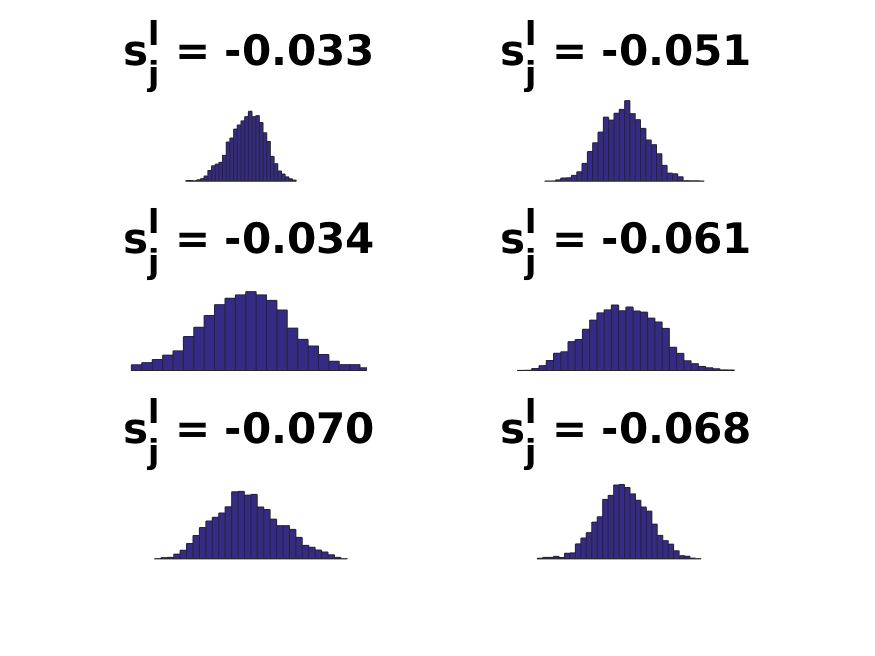}}
\subfloat[BLS (Low)]{\includegraphics[width=45mm]{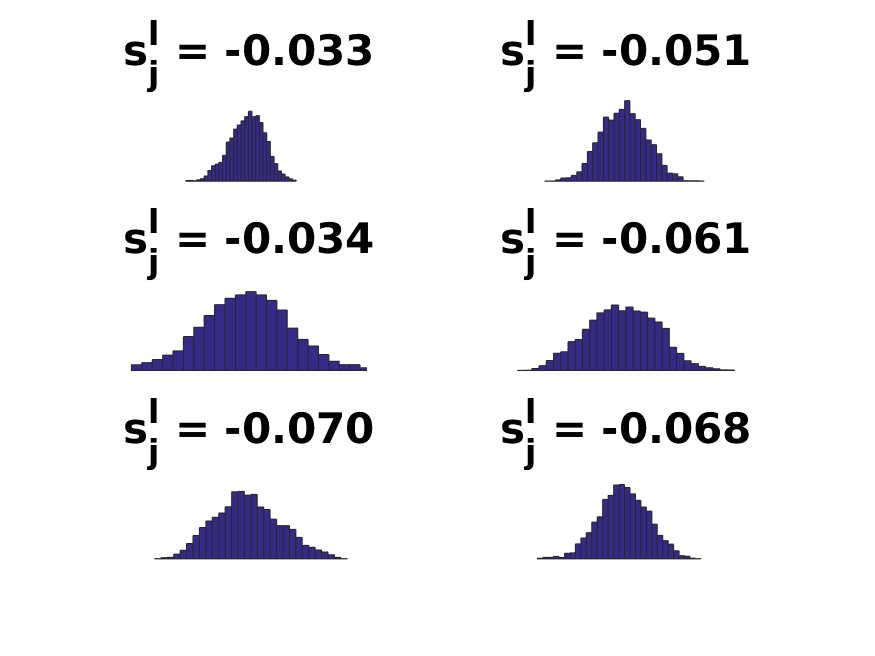}} \\
\subfloat[Ginzburg-Landau (High)]{\includegraphics[width=45mm]{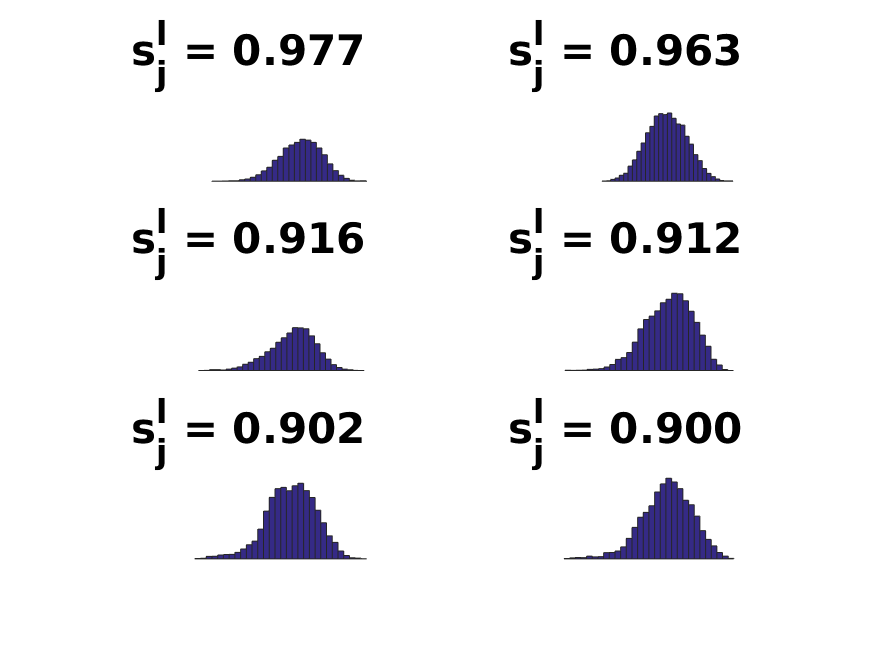}}
\subfloat[probit (High)]{\includegraphics[width=45mm]{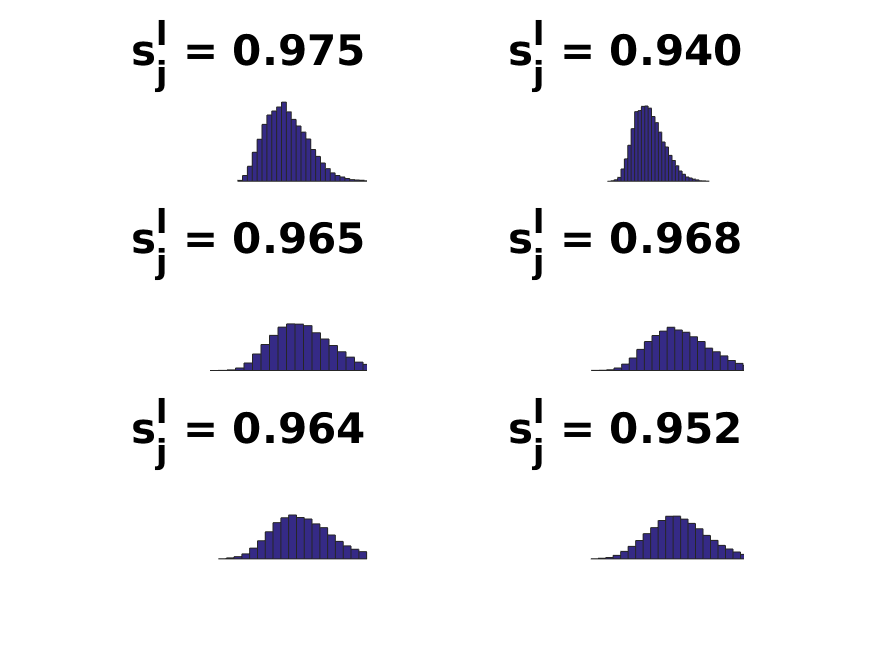}}
\subfloat[BLS (High)]{\includegraphics[width=45mm]{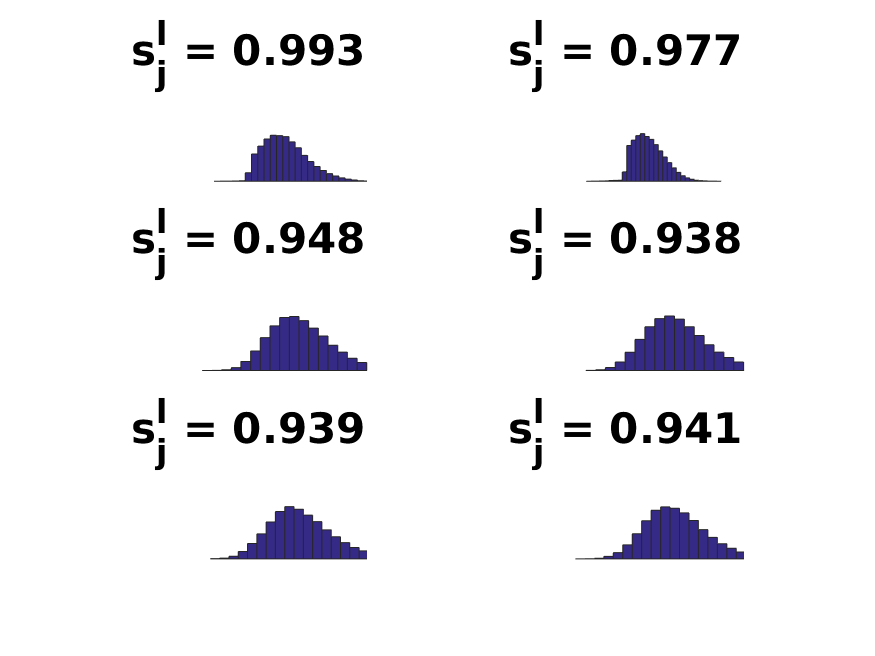}} 
\caption{Visualization of marginal posterior density for low and high confidence predictions across different models. 
Each image plots the empirical marginal posterior density 
of a certain node $i$, obtained from the histogram of $1\times 10^5$ approximate samples using pCN. Columns in the figure (e.g. a) and d)) are grouped by model. From left to right, the models are Ginzburg-Landau, probit, and Bayesian level-set respectively. From the top down, the rows in the figure (e.g. a)-c)) denote the low confidence and high confidence predictions respectively. For the top row, we select $6$ nodes with the lowest absolute value of the posterior mean $s^l_j$, defined in equation (\ref{eq:posterior-mean}), averaged across three models. For the bottom row, we select nodes with the highest average posterior mean $s^l_j$. We show the posterior mean $s^l_j$ on top of the histograms for reference. The experiment parameters are: $\epsilon = 10.0$, $\gamma = 0.6$, $\beta = 0.1$ for the Ginburg-Landau model, and $\gamma =  0.5$, $\beta = 0.2$ for the probit and BLS model.} \label{fig:glvsprobitmcmc}
\end{figure}

\subsection{Spectral Approximation and Projection Methods}
Here we discuss Algorithms \ref{alg:pcneig} and \ref{alg:pcneigfill},
designed to approximate the full (but expensive on large graphs)
Algorithm \ref{alg:pcnfull}.

First, we examine the quality of the approximation by applying the algorithms to the voting records dataset, a small enough problem where sampling using the full graph Laplacian is feasible. To quantify the quality of approximation, we compute the posterior mean of the thresholded variable $s^l_j$ for both Algorithm \ref{alg:pcneig} and Algorithm \ref{alg:pcneigfill}, and compare the mean absolute difference $\frac{1}{N}|s^l_j - s^{l*}_j|$ where $s^{l*}_j$ is the ``ground truth" value computed using the full Laplacian. Using $\gamma = 0.1$, $\beta = 0.3$, and a truncation level of $\ell = 150$, we observe that the mean absolute difference for spectral projection is \textbf{0.1577}, and \textbf{0.0261} for spectral approximation. In general, we set $\bar{\lambda}$ to be $\max_{j\leq \ell} \lambda_j$ where $\ell$ is the truncation level.

Next we apply the spectral projection/approximation algorithms with the Bayesian level-set likelihood to the hyperspectral image dataset;
the results for probit are similar (when we use small $\gamma$) but have
greater cost per step, because of the cdf evaluations required for probit.
The first two rows in Fig.\ref{fig:hsmcmcproj} show that the posterior mean $s^l_j$ is able to differentiate between 
different concentrations of the plume gas. We have also coloured pixels with 
$|s^l_j|<0.4$ in red to highlight the regions with greater levels of uncertainty. 
We observe that the red pixels mainly lie in the edges of the gas plume, which 
conforms with human intuition.  As in the voting records
example in the previous subsection, the spectral approximation method 
has greater posterior uncertainty, demonstrated by the greater number 
of red pixels in the second row of Fig.\ref{fig:hsmcmcproj} compared to the first row.
We conjecture that the spectral approximation is closer to what would be
obtained by sampling the full distribution, but we have not verified this
as the full problem is too large to readily sample. 
The bottom row of Fig.\ref{fig:hsmcmcproj} shows the result of using optimization
based classification, using the Gibzburg-Landau method. This is shown simply to
demonstrate consistency with  the full UQ approach shown 
in the other two rows, in terms of hard classification.

\begin{figure}[!ht]
\centering
\includegraphics[width=130mm, height=60mm]{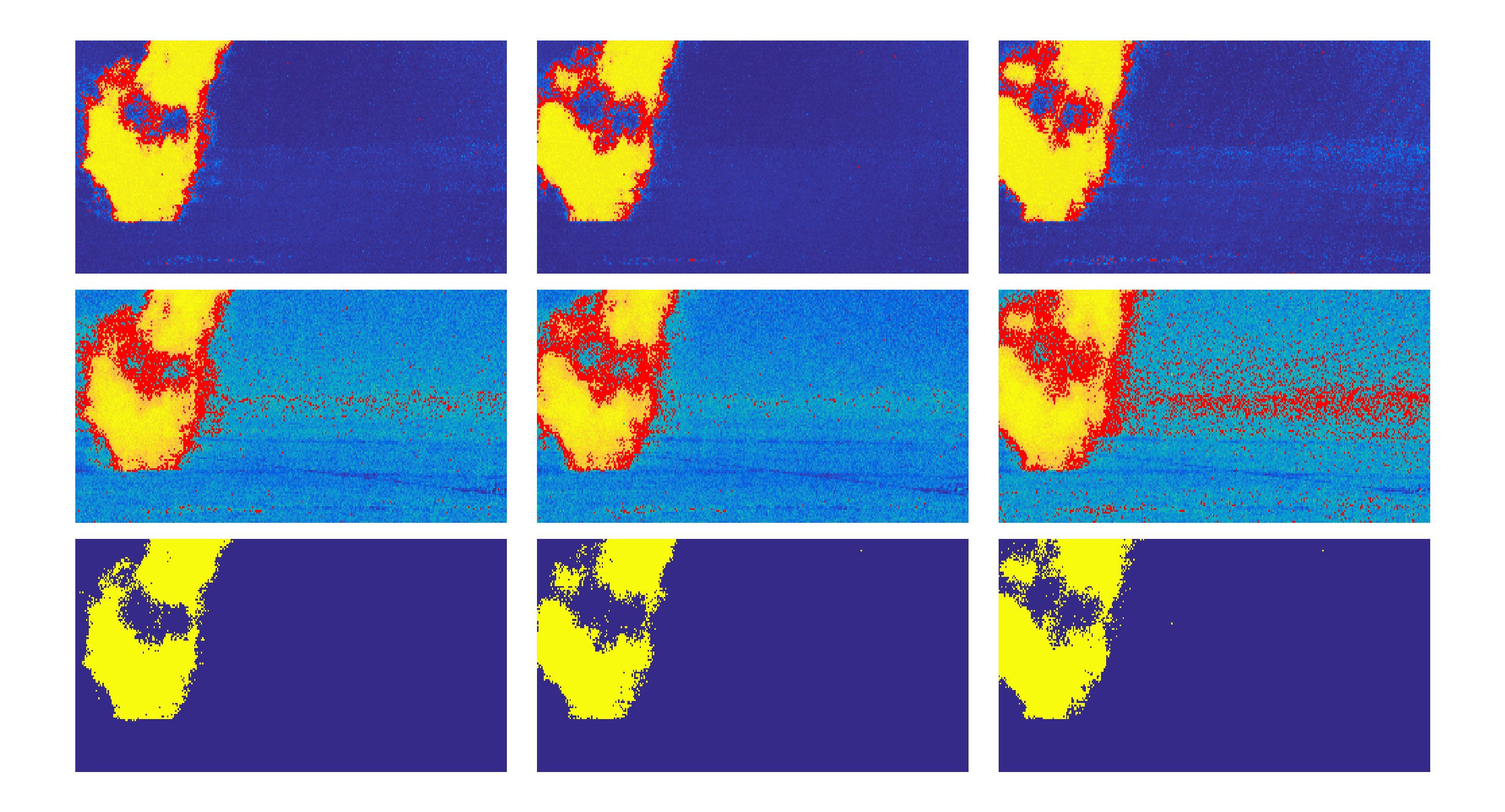}
\caption{Inference results on hyperspectral image dataset using spectral projection (top row), spectral approximation (middle row), and Ginzburg-Landau classification (bottom row). For the top two rows, the values of $s^l_j$ are plotted on a $[-1, 1]$ color scale on each pixel location. In addition, we highlight the regions of uncertain 
classification by coloring the pixels with $|s^l_j|<0.4$ in red.  The bottom row is the classification result from the Ginzburg-Landau model, shown here as a comparison.
The truncation level $\ell=40$, and for the spectral approximation algorithm, $\bar{\lambda} = 1$. 
We set $\gamma = 0.1$, $\beta=0.08$ and use $M=2 \times 10^4$ MCMC samples. 
We create the label data
by subsampling $8,000$ pixels ($\approx 2.8\%$ of the total) from the
labellings obtained by spectral clustering.}
\label{fig:hsmcmcproj}
\end{figure}

\subsection{Comparitive Remarks About The Different Models}
\label{ssec:CM}

At a high level we have shown the following concerning the three models
based on probit, level-set and Ginzburg-Landau:

\begin{itemize}

\item Bayesian
level set is considerably cheaper to implement than probit in Matlab because the
norm cdf evaluations required for probit are expensive. 

\item  Probit and Bayesian level-set behave similarly, for posterior
sampling, especially for
small $\gamma$, since they formally coincide when $\gamma=0.$ 

\item Probit and Bayesian level-set are superior to Ginzburg-Landau for
posterior sampling; this is because probit has log-concave posterior, 
whilst Ginzburg-Landau is multi-modal.

\item Ginzburg-Landau provides the best hard classifiers, when used
as an optimizer (MAP estimator), and provided it is initialized well. 
However it behaves poorly when not initialized
carefully because of multi-modal behaviour. In constrast probit provides almost
indistinguihsable classifiers, comparable or marginally worse in terms of accuracy,
and has a convex objective function and hence a unique minimizer.  (See Appendix for details of the relevant experiments.)

\end{itemize}

We expand on the details of these conclusions by studying run times of the
algorithms. All experiments are done on a 1.5GHz machine with Intel Core i7.
In Table \ref{tab:timeing}, we compare the running time of the MCMC for different models on various datasets. We use an a posteriori condition on the samples 
$u^{(k)}$ to empirically determine the sample size $M$ needed for the MCMC to converge. Note that this condition is by no means a replacement for a rigorous analysis of convergence using auto-correlation, but is designed to provide a ballpark estimate of the speed of these algorithms on real applications. We now define the a posteriori condition used. Let the approximate samples be $\{u^{(k)}\}$. We define the cumulative average as $\tilde{u}^{(k)} = \frac{1}{k} \sum_{j=1}^k u^{(j)}$, and find the first $k$ such that 
\begin{equation}
\|\tilde{u}^{(kT)} - \tilde{u}^{((k-1)T)} \| \leq {\tt tol}, 
\end{equation}
 where ${\tt tol}$ is the tolerance and $T$ is the number of iterations skipped. We set $T = 5000$, and also tune the stepsize parameter $\beta$ such that the average acceptance probability of the MCMC is over $50\%$. We choose the model parameters according to the experiments in the sections above so that the posterior mean gives a reasonable classification result.  
\begin{table}
\begin{tabular}{|l|l|l|l|} \hline 
Data 	 &	Voting Records & MNIST49 & Hyperspectral \\ 
(Tol) & ${\tt tol} = 1 \times 10^{-3}$ & ${\tt tol} = 1.5\times 10^{-3}$ & ${\tt tol} = 2\times 10^{-2}$ \\ 
(N) & $N = 435$ & $N\approx 1.1\times 10^4$ &  $N\approx 2.9\times 10^5$\\
(Neig) & $Neig = 435$ & $Neig = 300$ & $Neig = 50$\\ 
(J) & $J=5$ & $J=440$ & $J=8000$ \\ \hline
Preprocessing &   $t=0.7s$ & $t=50.8s$  & $t < 60s $\tablefootnote{According to the reporting in \cite{merkurjev2014graph}. } \\ \hline
probit &  $t=8.9s,$ & $t=176.4s, $ & $t = 5410.3 s,$\\ 
 &  $M=10^4$ & $ M=1.5\times 10^4$ & $ M = 1.5\times 10^4$\\ \hline
BLS &  $t=2.7s,$ & $t=149.1s,$ & $t = 970.8 s, $ \\
 &  $M=10^4$ & $M=1.5\times 10^4$ & $M = 1.5\times 10^4$ \\ \hline
GL &  $t=161.4s$ & - & - \\ 
 &  $M=1.8\times 10^5$ & - & - \\ \hline
\end{tabular}  
 \label{tab:timeing}  \caption {Timing for MCMC methods. We report both the number of samples $M$ and the running time of the algorithm $t$. The time for GL on MNIST and Hyperspectral is omitted due to running time being too slow. $J$ denotes the number of fidelity points used. For the voting records, we set $\gamma = 0.2, \beta = 0.4$ for probit and BLS, and $\gamma = 1$, $\beta = 0.1$ for Ginzburg-Landau. For MNIST, we set $\gamma = 0.1, \beta = 0.4$. For Hyperspctral, we set $\gamma = 1.0$, and $\beta = 0.1$. }
\end{table} 

We note that the number of iterations needed for the Ginzburg-Landau model is much higher compared to probit and the Bayesian level-set (BLS) method;
this is caused by the presence of multiple local minima in Ginzburg-Landau,
in contrast to the log concavity of probit. 
probit is slower than BLS due to the fact that evaluations of the cdf function for Gaussians is slow. 

\section{Conclusions and Future Directions}
\label{sec:6}

We introduce a Bayesian approach to uncertainty 
quantification for graph-based classification methods.
We develop algorithms to sample the posterior and to compute
MAP estimators and, through numerical experiments on a suite
of applications, we investigate the properties of the different
Bayesian models, and the algorithms used to study them.

Some future directions of this work include improvement of the current inference method, connections between the different models in this paper, and generalization to multiclass classification,  
for example by vectorizing the latent variable (as in existing non-Bayesian multiclass methods \cite{garcia2014multiclass, merkurjev2013mbo}), and applying multi-dimensional analogues of the likelihood functions used in this paper. Hierarchical methods could also be applied to account for the uncertainty in the various hyperparameters such as the label noise $\gamma$, or the length scale $\epsilon$ in the Ginzburg-Landau model. 
Finally, we could study in more detail the effects of either the spectral projection or the approximation method, either analytically on some tractable toy examples, or empirically on a suite of representative problems.

Studying the modelling assumptions themselves, guided by data, provides 
a  research direction of long term value. Such questions have not been
much studied to  the best of our knowledge. For example the choice of the signum function to relate the latent variable to the categorial data could be questioned, and other models employed; or the level value of $0$ chosen in the level set
approach could be chosen differently, or as a hyper-parameter. Furthermore the form of the prior on the latent variable $u$ could be questioned. We use a Gaussian prior which encodes first and second order statistical information about the unlabelled data. This Gaussian could contain hyper-parameters, of Whittle-Matern type, which could be learnt from the data; and more generally other non-Gaussian priors could and should be 
considered. For instance, in image data, it is often useful to model feature vectors as lying on submanifolds embedded in a higher dimensional space; such structure could be exploited. More generally, addressing the question of which generative models are appropriate for which types of data is an interesting and potentially fruitful research direction.


\vspace{0.2in}

\noindent{\bf Acknowledgements} AMS is grateful to Omiros Papaspiliopoulos
for illuminating discussions about the probit model.

\section*{Appendix}
\addcontentsline{toc}{section}{Appendix}
\renewcommand{\thesubsection}{\Alph{subsection}}

\subsection{Spectral Properties of $L$}
The spectral properties of $L$ are relevant to the spectral 
projection and approximation algorithms from the previous 
section.  Figure \ref{fig:spectra-all} shows the spectra for our
four examples. Note that in all cases the spectrum is contained
in the interval $[0,2]$, consistent with the theoretical result
in \cite[Lemma 1.7, Chapter 1]{chung1997spectral}. The size of
the eigenvalues near to $0$ will determine the accuracy of the spectral
projection algorithm. The rate at which the spectrum accumulates
at a value near  $1$, 
an accumulation which happens for all but the MNIST data set in
our four examples, affects the
accuracy of the spectral approximation algorithm.
There is theory that goes some way towards justifying the observed 
accumulation; 
see \cite[Proposition 9, item 4]{von2008consistency}. This theory
works under the assumption that the features $x_j$ are i.i.d samples from some fixed distribution, and the graph Laplacian is constructed from weights 
$w_{ij} = k(x_i, x_j)$, and $k$ satisfies symmetry, continuity and uniform 
positivity. As a consequence the theory does not apply to  
the graph construction used for the 
MNIST dataset since the $K$-nearest neighbor graph is local; empirically
we find that this results in a graph violating the positivity 
assumption on the weights. This explains why the MNIST example 
does not have a spectrum which accumulates at a value near $1$. 
In the case where the
spectrum does accumulate at a value near $1$, 
the rate can be controlled by adjusting
the parameter $\tau$ appearing in the weight calculations; in the limit
$\tau=\infty$ the graph becomes an unweighted complete graph and its spectrum
comprises the the two points $\{0,\lambda\}$  where $\lambda \to 1$ as
$n \to \infty$ -- see Lemma 1.7 in Chapter 1 of \cite{chung1997spectral}. 

\subsection{MAP Estimation as Semi-supervised Classification Method}
We first prove the convexity of the probit negative log likelihood. 
\begin{proposition}
\label{prop:convex-probit}
Let $\Jp(u)$ be the MAP estimation function for the probit model:
$$\Jp(u)=\frac12 \langle u, Pu \rangle-\sum_{j \in Z'}
{\rm log} \Bigl(\Psi(y(j)u(j);\gamma)\Bigr).$$ If $y(j) \in \{\pm 1\}$ for
all $j$ then $\Jp$ is a convex function in the variable $u$. 
\end{proposition}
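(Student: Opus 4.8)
The plan is to exploit the fact that convexity is preserved under affine precomposition and under nonnegative sums, so that the problem reduces to a single scalar inequality. First I would note that the quadratic term $\frac12\langle u, Pu\rangle$ is convex, since $P=c^{-1}L$ is positive semi-definite (Section~\ref{ssec:gauss}). It therefore suffices to show that each summand $u \mapsto -\log\Psi(y(j)u(j);\gamma)$ is convex in $u$. Because $y(j)\in\{\pm 1\}$, the map $u\mapsto y(j)u(j)$ is linear, and convexity is preserved under composition with a linear (indeed affine) map; hence the whole claim reduces to establishing that the scalar function
$$g(t):=-\log\Psi(t;\gamma),\qquad t\in\bbR,$$
is convex. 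This is the heart of the argument.

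Next I would compute $g''$ directly. Writing $\psi$ for the density of $\cN(0,\gamma^2)$, so that $\Psi'=\psi$ and $\psi'(t)=-\gamma^{-2}t\,\psi(t)$, a short calculation gives
$$g''(t)=\frac{\psi(t)^2-\psi'(t)\Psi(t)}{\Psi(t)^2}=\frac{\psi(t)}{\gamma^2\Psi(t)^2}\,h(t),\qquad h(t):=\gamma^2\psi(t)+t\,\Psi(t).$$
Since $\psi>0$ and $\Psi>0$, convexity of $g$ is equivalent to the scalar inequality $h(t)\ge 0$ for all $t\in\bbR$.

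To prove $h\ge 0$ I would use a monotonicity argument rather than a direct comparison. Differentiating and again using $\gamma^2\psi'(t)=-t\,\psi(t)$ yields the clean identity $h'(t)=\Psi(t)>0$, so $h$ is strictly increasing. It then remains only to identify the limit as $t\to-\infty$: both $\gamma^2\psi(t)\to 0$ and $t\,\Psi(t)\to 0$, the latter because the Gaussian tail obeys the Mills-ratio estimate $\Psi(t)\sim\gamma^2|t|^{-1}\psi(t)$, giving $t\,\Psi(t)\sim-\gamma^2\psi(t)$ and hence $h(t)\to 0$. A strictly increasing function with infimum $0$ (attained only in the limit $t\to-\infty$) satisfies $h(t)>0$ at every finite $t$, so $g''\ge 0$ and $g$ is convex. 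This propagates back through the reductions above to give convexity of $\Jp$.

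I expect the main obstacle to be the regime $t<0$: for $t\ge 0$ both terms of $h$ are manifestly nonnegative, but for $t<0$ the term $t\,\Psi(t)$ is negative and one must quantitatively compare the Gaussian tail $\Psi(t)$ against $|t|\psi(t)$. The identity $h'=\Psi$ together with the tail asymptotics is precisely what circumvents a messy direct estimate. As an alternative to the explicit computation, one could instead invoke the general principle that the cumulative distribution function of a log-concave density is itself log-concave; the density $\psi$ of $\cN(0,\gamma^2)$ is log-concave since $\log\psi$ is a concave quadratic, which immediately yields concavity of $\log\Psi$ and hence convexity of $g=-\log\Psi$.
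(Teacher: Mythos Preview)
Your proof is correct and follows the same overall reduction as the paper: positive semi-definiteness of $P$ handles the quadratic term, linearity of $u\mapsto y(j)u(j)$ reduces everything to the scalar claim that $t\mapsto-\log\Psi(t;\gamma)$ is convex, and the work lies in showing $h(t)=\gamma^2\psi(t)+t\,\Psi(t)\ge 0$.

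Where you diverge from the paper is in establishing this last inequality. The paper (after normalising $\gamma=1$) splits on the sign of $t$: for $t\ge 0$ both terms are nonnegative, while for $t<0$ it uses the integral identity
\[
-x\int_{-\infty}^{x}e^{-t^2/2}\,dt - e^{-x^2/2}=\int_{-\infty}^{x}(t-x)\,e^{-t^2/2}\,dt,
\]
whose integrand is manifestly negative on $(-\infty,x)$. Your argument instead differentiates to obtain the clean identity $h'=\Psi>0$ and then reads off $h(-\infty)=0$ from the Mills-ratio tail estimate, concluding by monotonicity. Both arguments are elementary and of comparable length; the paper's integral trick is self-contained and avoids any appeal to asymptotics, while your monotonicity argument makes the structure $h'=\Psi$ explicit and arguably explains \emph{why} the inequality holds rather than merely verifying it. Your closing remark that log-concavity of the Gaussian density immediately implies log-concavity of its cdf is a nice higher-level shortcut that the paper does not mention.
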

\begin{proof}
Since $P$ is semi positive definite, it suffices to show 
that 
$$\sum_{j \in Z'} {\rm log} \Bigl(\Psi(y(j)u(j);\gamma)\Bigr)$$ 
is convex.
Thus, since $y(j)  \in \{\pm 1\}$ for all $j,$ it suffices to show that 
${\rm log} \Bigl(\Psi(x;\gamma)\Bigr)$ is concave with respect to $x$.
Since 
$$\Psi(x; \gamma) = \frac{1}{\sqrt{2\pi\gamma}}\int_{-\infty}^{x}\exp(\frac{-t^2}{2\gamma^2}) dt,$$ we have $\Psi(\gamma x; \gamma) = \Psi(x; 1)$. Since 
scaling $x$ by a constant doesn't change convexity, it suffices to consider
the case $\gamma = 1$. Taking the second derivative with $\gamma = 1$, we see
that it suffices to prove that, for all $x \in \bbR$ and all $\gamma>0$,
\begin{equation}
\Psi^{''}(x; 1)\Psi(x; 1) - \Psi^{'}(x; 1)\Psi^{'}(x; 1)<0. 
\end{equation}
Plugging in the definition of $\Psi$, we have
\begin{equation}
\label{eq:convproof1}
\Psi^{''}(x; 1)\Psi(x; 1) - \Psi^{'}(x; 1)\Psi^{'}(x; 1) =
\frac{-1}{2\pi}\exp(\frac{-x^2}{2})
\Bigl(x \int_{-\infty}^{x}\exp(\frac{-t^2}{2}) dt + \exp(\frac{-x^2}{2})\Bigr).
\end{equation}
Clearly the expression in equation (\ref{eq:convproof1}) is less than $0$ for 
$x\geq 0$. For the case $x<0$, divide equation (\ref{eq:convproof1}) by 
$\frac{1}{2\pi}\exp(\frac{-x^2}{2})$ and note that this gives 
\begin{align}
\begin{aligned}
-x \int_{-\infty}^{x}\exp(\frac{-t^2}{2}) dt - \exp(\frac{-x^2}{2})
&= -x \int_{-\infty}^{x}\exp(\frac{-t^2}{2}) dt+
\int_{-\infty}^{x}t\exp(\frac{-t^2}{2}) dt  \\
&= \int_{-\infty}^{x}(t-x)\exp(\frac{-t^2}{2}) dt  < 0
\end{aligned}
\end{align}
and the proof is complete.
\end{proof}
The probit  MAP estimator thus has a considerable computational advantage over the Ginzburg-Landau MAP estimator, because the latter is not convex and, indeed, can have large numbers
of minimizers. We now discuss numerical results designed to probe the consequences of
convexity, or lack of it, for classification accuracy. The purpose of these experiments is not to match state-of-art results for classification, but rather to study properties of the MAP estimator when varying the feature noise and the percentage of labelled data.

We employ
the two moons  and the MNIST $(4,9)$ data sets. The methods are 
evaluated on a range of values for the percentage of labelled data points, 
and also for a range of values of the feature variance $\sigma$ in the
two moons dataset. The experiments are conducted for $100$ trials with 
different initializations (both two moons and  MNIST $(4,9)$) and 
different data realizations (for two moons only). In Figure \ref{fig:twomoonscomp}, we plot the median classification  
accuracy with error bars from the $100$ trials against the feature variance $\sigma$ for the 
two moons dataset. As well as Ginzburg-Landau and probit classification,
we also display results from spectral clustering based on thresholding
the Feidler eigenvector.  The percentage of fidelity points used is $0.5\%$, $1\%$, 
and $3\%$ for each column. We do the same in  Figure \ref{fig:mnistalgocomp} for  the 4 -9 MNIST data set against the same 
percentages of labelled points.

\begin{figure}[!ht]
\label{fig:twomoonscomp}
\begin{tabular}{ccc}
\subfloat[Fidelity = 0.5$\%$]{\includegraphics[width=40mm]{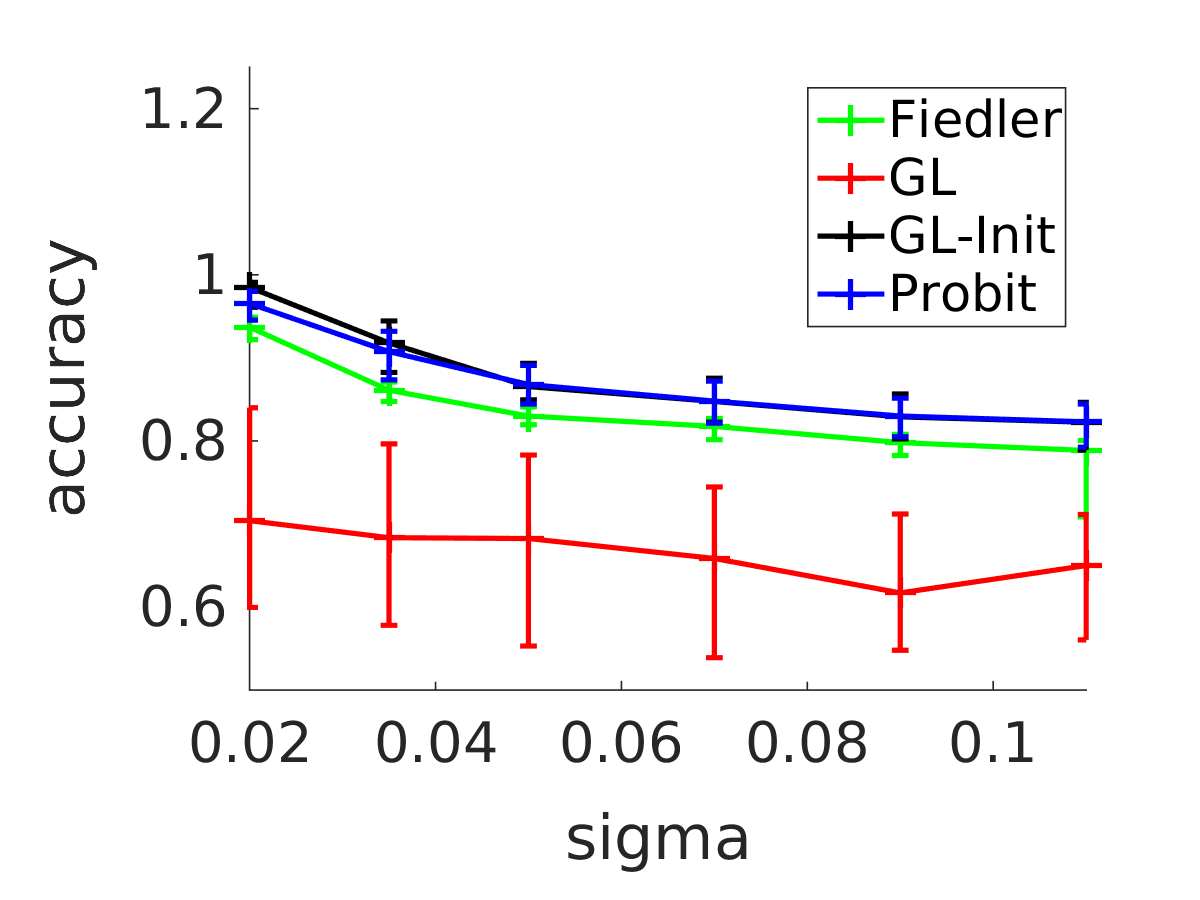}} 
& \subfloat[Fidelity = 1$\%$]{\includegraphics[width=40mm]{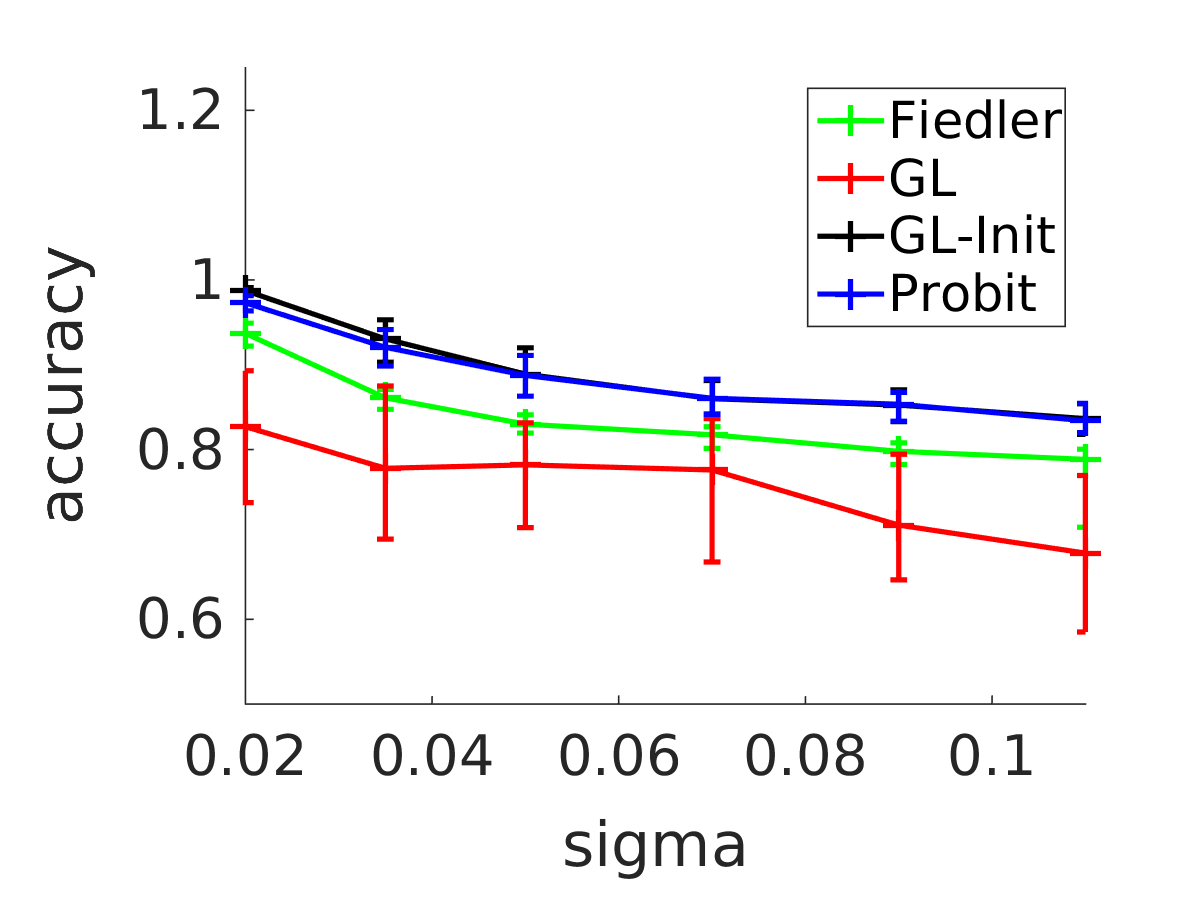}}
& \subfloat[Fidelity = 3$\%$]{\includegraphics[width=40mm]{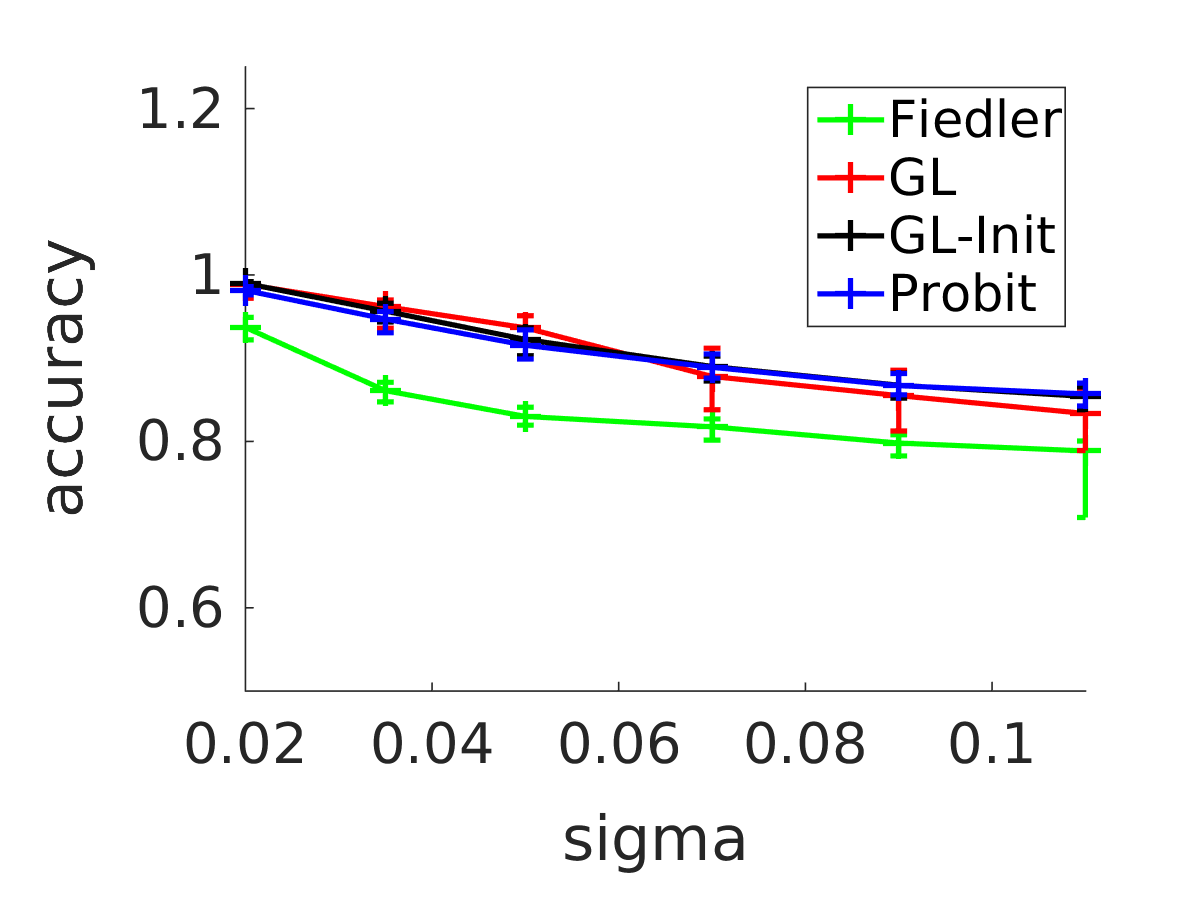}}
\end{tabular}
\caption{Classification accuracy of different algorithms for Two Moons Dataset compared with $\sigma$ and percentage of labelled nodes, with $N = 2,000$. The algorithms used are: Ginzburg-Landau MAP estimator with random initialization, Ginzburg-Landau with initialization given by probit model, probit MAP estimation, and spectral clustering (thresholding the Fiedler vector). For each trial, we generate a realization of the two moons dataset with given $\sigma$ and select randomly a certain percentage of nodes as fidelity, and a total of $50$ trials are run for each combination of parameters. We use spectral projection with number of eigenvectors $Neig = 150$. We plot the median accuracy along with error bars indicating the 25 and 75-th quantile of the classification accuracy of each method.  We set  $\gamma = 0.1$ for the probit model, and $\gamma = 1.0$, $\epsilon = 1.0$ for Ginzburg-Landau. }
\end{figure}

\begin{figure}[!ht]
\centering
\includegraphics[width=75mm]{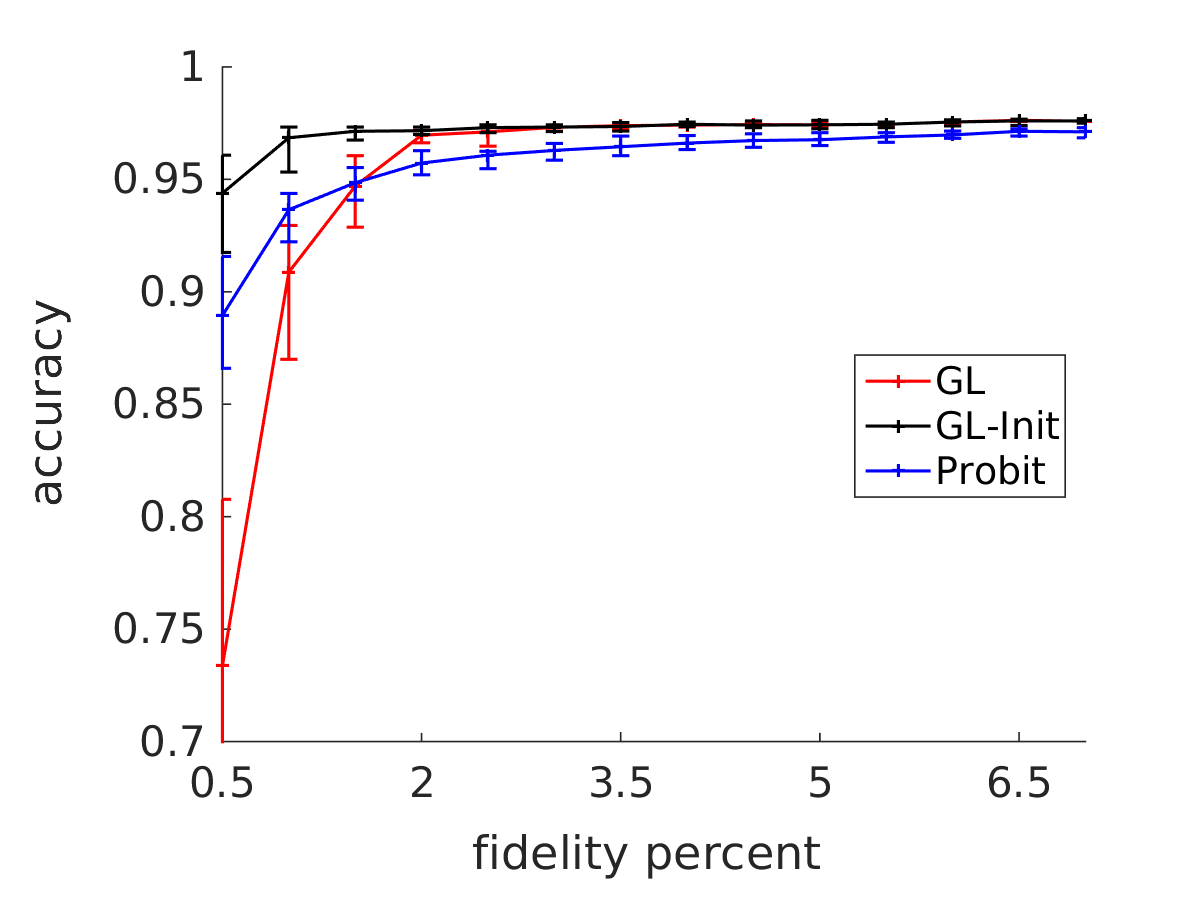}
\caption{Classification accuracy of different algorithms for the 4-9 MNIST dataset versus percentage of labelled nodes. The algorithms used are: Ginzburg-Landau with random initialization, Ginzburg-Landau with initialization given by probit model, probit MAP estimation. For each trial, we select randomly a certain percentage of nodes as fidelity, and a total of $50$ trials are run.  We use spectral projection with number of eigenvectors $Neig = 300$.  We plot the median accuracy along with error bars indicating the 25 and 75-th quantile of the classification accuracy of each method.  We set  $\gamma = 0.1$ for the probit model, and $\gamma = 1.0$, $\epsilon = 1.0$ for Ginzburg-Landau.}\label{fig:mnistalgocomp}
\end{figure}

The non-convexity of the Ginzburg-Landau model can result in large variance 
in classification accuracy; the extent of this depends on the
percentage of observed labels. The existence of sub-optimal local extrema
causes the large variance.
If initialized without information about the classification,
Ginzburg-Landau can perform very badly in comparison with probit. On the
other hand we find that the best performance of the Ginzburg-Landau model,
when initialized at the probit minimizer, is typically slightly
better than the probit model. 

We note that the probit model is convex and theoretically should have results independent of the initialization. However, we see there are still small variations in the classification result from different initializations. This is due to slow convergence of gradient methods caused by the flat-bottomed well of the probit log-likelihood.  As mentioned above this can be understood by noting
that, for small gamma, probit and level-set are closely related and that
the level-set MAP estimator does not exist -- minimizing sequences
converge to zero, but the infimum is not attained at zero.


\begin{thebibliography}{10}

\bibitem{alaiz2016convex}
{\sc C.~M. Ala{\'\i}z, M.~Fanuel, and J.~A. Suykens}, {\em Convex formulation
  for kernel {PCA} and its use in semi-supervised learning}, arXiv preprint
  arXiv:1610.06811,  (2016).

\bibitem{alaiz2016robust}
{\sc C.~M. Ala{\'\i}z, M.~Fanuel, and J.~A. Suykens}, {\em Robust
  classification of graph-based data}, arXiv preprint arXiv:1612.07141,
  (2016).

\bibitem{belkin2004regularization}
{\sc M.~Belkin, I.~Matveeva, and P.~Niyogi}, {\em Regularization and
  semi-supervised learning on large graphs}, in International Conference on
  Computational Learning Theory, Springer, 2004, pp.~624--638.

\bibitem{belkin2006manifold}
{\sc M.~Belkin, P.~Niyogi, and V.~Sindhwani}, {\em Manifold regularization: A
  geometric framework for learning from labeled and unlabeled examples},
  Journal of Machine Learning Research, 7 (2006), pp.~2399--2434.

\bibitem{berthod1996bayesian}
{\sc M.~Berthod, Z.~Kato, S.~Yu, and J.~Zerubia}, {\em Bayesian image
  classification using {Markov} random fields}, Image and Vision Computing, 14
  (1996), pp.~285--295.

\bibitem{luos}
{\sc A.~Bertozzi, X.~Luo, and A.~M. Stuart}, {\em {Scalable sampling methods
  for graph-based semi-supervised learning}}, To be submitted. arXiv preprint
  arXiv:,  (2017).

\bibitem{bertozzi2012diffuse}
{\sc A.~L. Bertozzi and A.~Flenner}, {\em Diffuse interface models on graphs
  for classification of high dimensional data}, Multiscale Modeling \&
  Simulation, 10 (2012), pp.~1090--1118.

\bibitem{beskos2009optimal}
{\sc A.~Beskos, G.~Roberts, and A.~Stuart}, {\em Optimal scalings for local
  {Metropolis-Hastings} chains on nonproduct targets in high dimensions}, The
  Annals of Applied Probability,  (2009), pp.~863--898.

\bibitem{BRSV08}
{\sc A.~Beskos, G.~Roberts, A.~M. Stuart, and J.~Voss}, {\em \uppercase{MCMC}
  methods for diffusion bridges.}, Stochastics and Dynamics, 8 (2008),
  pp.~319--350.

\bibitem{bishop2007pattern}
{\sc C.~Bishop}, {\em Pattern recognition and machine learning (information
  science and statistics), 1st edn. 2006. corr. 2nd printing edn}, Springer,
  New York,  (2007).

\bibitem{blum2001learning}
{\sc A.~Blum and S.~Chawla}, {\em Learning from labeled and unlabeled data
  using graph mincuts},  (2001).

\bibitem{boykov1998markov}
{\sc Y.~Boykov, O.~Veksler, and R.~Zabih}, {\em {Markov} random fields with
  efficient approximations}, in Computer vision and pattern recognition, 1998.
  Proceedings. 1998 IEEE computer society conference on, IEEE, 1998,
  pp.~648--655.

\bibitem{boykov2001fast}
{\sc Y.~Boykov, O.~Veksler, and R.~Zabih}, {\em Fast approximate energy
  minimization via graph cuts}, IEEE Transactions on pattern analysis and
  machine intelligence, 23 (2001), pp.~1222--1239.

\bibitem{boykov2001interactive}
{\sc Y.~Y. Boykov and M.-P. Jolly}, {\em Interactive graph cuts for optimal
  boundary \& region segmentation of objects in nd images}, in Computer Vision,
  2001. ICCV 2001. Proceedings. Eighth IEEE International Conference on,
  vol.~1, IEEE, 2001, pp.~105--112.

\bibitem{broadwater2011primer}
{\sc J.~B. Broadwater, D.~Limsui, and A.~K. Carr}, {\em A primer for chemical
  plume detection using {LWIR} sensors}, Technical Paper, National Security
  Technology Department, Las Vegas, NV,  (2011).

\bibitem{buhler2009spectral}
{\sc T.~B{\"u}hler and M.~Hein}, {\em Spectral clustering based on the graph
  p-{Laplacian}}, in Proceedings of the 26th Annual International Conference on
  Machine Learning, ACM, 2009, pp.~81--88.

\bibitem{chung1997spectral}
{\sc F.~R. Chung}, {\em Spectral graph theory}, vol.~92, American Mathematical
  Soc., 1997.

\bibitem{CRSW08}
{\sc S.~L. Cotter, G.~O. Roberts, A.~M. Stuart, and D.~White}, {\em
  \uppercase{MCMC} methods for functions: modifying old algorithms to make them
  faster.}, Statistical Science, 28 (2013), pp.~424--446.

\bibitem{dahlhaus1992complexity}
{\sc E.~Dahlhaus, D.~S. Johnson, C.~H. Papadimitriou, P.~D. Seymour, and
  M.~Yannakakis}, {\em The complexity of multiway cuts}, in Proceedings of the
  twenty-fourth annual ACM symposium on theory of computing, ACM, 1992,
  pp.~241--251.

\bibitem{dunlop2016hierarchical}
{\sc M.~M. Dunlop, M.~A. Iglesias, and A.~M. Stuart}, {\em Hierarchical
  bayesian level set inversion}, arXiv preprint arXiv:1601.03605,  (2016).

\bibitem{fowlkes2004spectral}
{\sc C.~Fowlkes, S.~Belongie, F.~Chung, and J.~Malik}, {\em Spectral grouping
  using the {Nystr\"om} method}, IEEE transactions on pattern analysis and
  machine intelligence, 26 (2004), pp.~214--225.

\bibitem{garcia2014multiclass}
{\sc C.~Garcia-Cardona, E.~Merkurjev, A.~L. Bertozzi, A.~Flenner, and A.~G.
  Percus}, {\em Multiclass data segmentation using diffuse interface methods on
  graphs}, IEEE transactions on pattern analysis and machine intelligence, 36
  (2014), pp.~1600--1613.

\bibitem{hammond2011wavelets}
{\sc D.~K. Hammond, P.~Vandergheynst, and R.~Gribonval}, {\em Wavelets on
  graphs via spectral graph theory}, Applied and Computational Harmonic
  Analysis, 30 (2011), pp.~129--150.

\bibitem{hartog2016nonparametric}
{\sc J.~Hartog and H.~van Zanten}, {\em Nonparametric bayesian label prediction
  on a graph}, arXiv preprint arXiv:1612.01930,  (2016).

\bibitem{higham2004unified}
{\sc D.~J. Higham and M.~Kibble}, {\em A unified view of spectral clustering},
  University of Strathclyde mathematics research report, 2 (2004).

\bibitem{hu2015multi}
{\sc H.~Hu, J.~Sunu, and A.~L. Bertozzi}, {\em Multi-class graph {Mumford-Shah}
  model for plume detection using the {MBO} scheme}, in Energy Minimization
  Methods in Computer Vision and Pattern Recognition, Springer, 2015,
  pp.~209--222.

\bibitem{iglesias2015bayesian}
{\sc M.~A. Iglesias, Y.~Lu, and A.~M. Stuart}, {\em {A Bayesian Level Set
  Method for Geometric Inverse Problems}}, Interfaces and Free Boundary
  Problems, arXiv preprint arXiv:1504.00313,  (2015).

\bibitem{kapoor2005hyperparameter}
{\sc A.~Kapoor, Y.~Qi, H.~Ahn, and R.~Picard}, {\em Hyperparameter and kernel
  learning for graph based semi-supervised classification}, in NIPS, 2005,
  pp.~627--634.

\bibitem{lecun1998mnist}
{\sc Y.~LeCun, C.~Cortes, and C.~J. Burges}, {\em The {MNIST} database of
  handwritten digits, online at http://yann.lecun. com/exdb/mnist/}, 1998.

\bibitem{li2012markov}
{\sc S.~Z. Li}, {\em {Markov} random field modeling in computer vision},
  Springer Science \& Business Media, 2012.

\bibitem{lu2011multi}
{\sc S.~Lu and S.~V. Pereverzev}, {\em Multi-parameter regularization and its
  numerical realization}, Numerische Mathematik, 118 (2011), pp.~1--31.

\bibitem{madry2010fast}
{\sc A.~Madry}, {\em Fast approximation algorithms for cut-based problems in
  undirected graphs}, in Foundations of Computer Science (FOCS), 2010 51st
  Annual IEEE Symposium on, IEEE, 2010, pp.~245--254.

\bibitem{merkurjev2013mbo}
{\sc E.~Merkurjev, T.~Kostic, and A.~L. Bertozzi}, {\em An {MBO} scheme on
  graphs for classification and image processing}, SIAM Journal on Imaging
  Sciences, 6 (2013), pp.~1903--1930.

\bibitem{merkurjev2014graph}
{\sc E.~Merkurjev, J.~Sunu, and A.~L. Bertozzi}, {\em Graph {MBO} method for
  multiclass segmentation of hyperspectral stand-off detection video}, in Image
  Processing (ICIP), 2014 IEEE International Conference on, IEEE, 2014,
  pp.~689--693.

\bibitem{neal}
{\sc R.~Neal}, {\em Regression and classification using {Gaussian} process
  priors}, Bayesian Statistics, 6, p.~475.
\newblock Available at http://www.cs.toronto.
  edu/~radford/valencia.abstract.html.

\bibitem{osting2014minimal}
{\sc B.~Osting, C.~D. White, and {\'E}.~Oudet}, {\em Minimal {Dirichlet} energy
  partitions for graphs}, SIAM Journal on Scientific Computing, 36 (2014),
  pp.~A1635--A1651.

\bibitem{owhadi2}
{\sc H.~Owhadi, C.~Scovel, and T.~Sullivan}, {\em On the brittleness of
  {Bayesian} inference}, SIAM Review, 57 (2015), pp.~566--582.

\bibitem{owhadi}
{\sc H.~Owhadi, C.~Scovel, T.~J. Sullivan, M.~McKerns, and M.~Ortiz}, {\em
  Optimal uncertainty quantification}, SIAM Review, 55 (2013), pp.~271--345.

\bibitem{R}
{\sc G.~O. Roberts, A.~Gelman, W.~R. Gilks, et~al.}, {\em Weak convergence and
  optimal scaling of random walk {Metropolis} algorithms}, The Annals of
  Applied Probability, 7 (1997), pp.~110--120.

\bibitem{shuman2011semi}
{\sc D.~I. Shuman, M.~Faraji, and P.~Vandergheynst}, {\em Semi-supervised
  learning with spectral graph wavelets}, in Proceedings of the International
  Conference on Sampling Theory and Applications (SampTA),
  no.~EPFL-CONF-164765, 2011.

\bibitem{sindhwani200611}
{\sc V.~Sindhwani, M.~Belkin, and P.~Niyogi}, {\em The geometric basis of
  semi-supervised learning},  (2006).

\bibitem{smith}
{\sc R.~C. Smith}, {\em Uncertainty quantification: theory, implementation, and
  applications}, vol.~12, SIAM, 2013.

\bibitem{subramanya2011semi}
{\sc A.~Subramanya and J.~Bilmes}, {\em Semi-supervised learning with measure
  propagation}, Journal of Machine Learning Research, 12 (2011),
  pp.~3311--3370.

\bibitem{sull}
{\sc T.~J. Sullivan}, {\em Introduction to uncertainty quantification},
  vol.~63, Springer, 2015.

\bibitem{talukdar2009new}
{\sc P.~Talukdar and K.~Crammer}, {\em New regularized algorithms for
  transductive learning}, Machine Learning and Knowledge Discovery in
  Databases,  (2009), pp.~442--457.

\bibitem{van2012gamma}
{\sc Y.~Van~Gennip and A.~L. Bertozzi}, {\em {$\Gamma $}-convergence of graph
  {Ginzburg-Landau} functionals}, Advances in Differential Equations, 17
  (2012), pp.~1115--1180.

\bibitem{von2007tutorial}
{\sc U.~Von~Luxburg}, {\em A tutorial on spectral clustering}, Statistics and
  Computing, 17 (2007), pp.~395--416.
  
  \bibitem{von2008consistency}
{\sc U.~Von~Luxburg, M.~Belkin, and O.~Bousquet}, {\em Consistency of spectral
  clustering}, The Annals of Statistics,  (2008), pp.~555--586.

\bibitem{wahba1990spline}
{\sc G.~Wahba}, {\em Spline models for observational data}, SIAM, 1990.

\bibitem{williams1996gaussian}
{\sc C.~K. Williams and C.~E. Rasmussen}, {\em {Gaussian Processes for
  Regression}},  (1996).

\bibitem{williams2000using}
{\sc C.~K. Williams and M.~Seeger}, {\em Using the {Nystr{\"o}m} method to
  speed up kernel machines}, in Proceedings of the 13th International
  Conference on Neural Information Processing Systems, MIT press, 2000,
  pp.~661--667.

\bibitem{xiu}
{\sc D.~Xiu}, {\em Numerical Methods For Stochastic Computations: A Spectral
  Method Approach}, Princeton University Press, 2010.

\bibitem{zelnik2004self}
{\sc L.~Zelnik-Manor and P.~Perona}, {\em Self-tuning spectral clustering}, in
  Advances in neural information processing systems, 2004, pp.~1601--1608.

\bibitem{zhou2004learning}
{\sc D.~Zhou, O.~Bousquet, T.~N. Lal, J.~Weston, and B.~Sch{\"o}lkopf}, {\em
  Learning with local and global consistency}, Advances in neural information
  processing systems, 16 (2004), pp.~321--328.

\bibitem{zhu2005semi}
{\sc X.~Zhu}, {\em Semi-supervised learning literature survey}, Technical
  Report TR1530.

\bibitem{zhu2003semi}
{\sc X.~Zhu, Z.~Ghahramani, J.~Lafferty, et~al.}, {\em Semi-supervised learning
  using {Gaussian} fields and harmonic functions}, in ICML, vol.~3, 2003,
  pp.~912--919.

\bibitem{zhu2003semib}
{\sc X.~Zhu, J.~D. Lafferty, and Z.~Ghahramani}, {\em Semi-supervised learning:
  From {Gaussian} fields to {Gaussian} processes},  (2003).

\end{thebibliography}
\end{document}